%
\documentclass[runningheads]{llncs}
\usepackage{graphicx, color}
\usepackage{amssymb, amsmath}
\usepackage[ruled,vlined,lined,algochapter]{algorithm2e}
%
%

\vfuzz2pt 
\hfuzz2pt 
\newtheorem{teo}{Theorem}
\newtheorem{coro}{Corollary}
\newtheorem{lema}{Lemma}
\newtheorem{prop}{Proposition}
\newtheorem{ejem}{Example}
\newtheorem{defi}{Definition}



\newcommand{\ds}{\displaystyle}
\newcommand\Omit[1]{}
\newcommand{\ssi}{\Leftrightarrow}

\newcommand{\partes}[1]{{\mathcal P}({#1})}


\newcommand{\A}{\mathcal{A}}
\newcommand{\R}{\mathcal{R}}
\newcommand{\Pref}{\mathcal{P}}
\newcommand{\F}{\mathcal{F}}
\newcommand{\G}{\mathcal{G}}
\newcommand{\mirec}{\unrhd}

\newcommand{\dom}{\succ}



\newcommand{\miplr}{\succeq_{_\R}}
\newcommand{\mplr}{\succ_{_\R}}
\newcommand{\iplr}{\simeq_{_\R}}

\newcommand{\hmi}{\succeq_{h}}
\newcommand{\hm}{\succ_{h}}
\newcommand{\hi}{\sim_{h}}

\def\qed{\hfill {\scriptsize{$\blacksquare$}}}

\begin{document}
\title{Resource allocation under uncertainty: an algebraic and qualitative treatment\thanks{The third author is supported by the Project CDCHTA-ULA C-1855-13-05-A.}}
\titlerunning{Resource allocation under uncertainty}
%
\author{Franklin Camacho\inst{1}
\and
Gerardo Chac\'on\inst{2}%
\and
Ram\'on Pino Per\'ez\inst{3}
%
}
\authorrunning{F. Camacho et al.}
%
\institute{School of Mathematical Sciences and Information Techonology,
Yachay Tech University, Urcuqu\'{i}, Ecuador.\\ \email{fcamacho@yachaytech.edu.ec}
 \and
Department of Science, Technology and Mathematics, Gallaudet University, Washington DC, USA.\\
\email{gerardo.chacon@gallaudet.edu}\and
Departamento de Matem\'aticas, Facultad de Ciencias, Universidad de Los Andes - M\'erida, Venezuela.\\
\email{pino@ula.ve}}
\maketitle              
\begin{abstract}

We use  an algebraic viewpoint, namely a matrix framework to deal with the problem of resource allocation under uncertainty in the context of
a qualitative approach. 
Our basic qualitative data are
a plausibility relation over the resources, a hierarchical  relation over the agents and of course the preference that the agents have over the resources. With this data we propose a qualitative binary relation $\unrhd$ between allocations such that $\mathcal{F}\unrhd \mathcal{G}$ has the  following intended meaning: the allocation $\mathcal{F}$ produces more or equal social welfare than the allocation $\mathcal{G}$.
We prove that there is a family of  allocations  which are maximal with respect to $\unrhd$. We prove also that there is a notion of simple deal such that  optimal allocations can be reached by sequences of simple deals. Finally, we introduce some mechanism for discriminating {optimal} allocations.
\end{abstract}
\keywords{Resource allocation \and Optimal allocations \and Social welfare \and Qualitative framework}
\section{Introduction}\label{intro}

The task of distribution of a divisible or indivisible set of goods among a set of agents  has been studied mainly in Economics.  The mathematical models of fair distributions, in particular
when the goods are divisible, have been studied by some mathematicians and economists such as Banach and Steinhaus \cite{Ste48,Ste49} before the middle of the Twentieth century and by Brams, Taylor, Zwicker and others towards the end of the past century \cite{BTZ94,BT95,BTZ97} (for a good survey see \cite{BT96}). Some axiomatic analysis about resource allocation problems have been done by Moulin, Thomson and BeviÃÂ¡ among others\cite{MT97,Tho11,Bev96}.
In more recent years the mathematical models of satisfactory distributions have occupied  different communities of researchers besides economists.  Among them, computer scientists working in the area of Artificial Intelligence and Multi-agent Systems, see for instance the works of Sandholm \cite{San98,San99}, Endriss et al. \cite{EMST06} and Chevaleyre et al. \cite{CEEM07,CEM17}.

The problem  concerning  this paper is the distribution of a finite set of indivisible goods (resources) into a finite set of individuals (agents). This is known as the problem of resource allocation. There are many concerns about this problem. The first one is how to consider a determined allocation satisfactory. To deal with this aspect some concepts of social welfare
are used, most of them coming from aggregation of numerical functions of individual utility \cite{Mou88,EMST06}. Thus,  the notion of optimal allocation  arises naturally as the one  producing the maximal social welfare.

Another important aspect studied is the rationality of negotiation between agents in order to reach better resource allocations. Again here the notion of {\em being better} is closely connected to the improvement of the social welfare \cite{San98,EMST06}. From the algorithmic point of view it is important that these negotiations be simple and that any sequence of these simple negotiations
ends in an optimal allocation.

In this work we continue an investigation of these problems  in which we don't use numbers. That is, we propose  a qualitative notion of preference between the allocations that is based only
on the knowledge of the set of resources that agents desire, a plausibility relation between set of resources, and a hierarchical relation between agents. In the works  of Pino P\'erez and colleagues \cite{PV12,PVC16}  three strong hypothesis were made.
First, it was assumed that there was a linear order over the resources. Second, that the plausibility relation between the resources was the possibilistic relation associated to this linear order. Third, it was supposed that there was a linear order between agents encoding a hierarchy between
 them. In this work we generalize these three assumptions. We suppose now that the relation between the resources and the relation between
 the agents are total preorders. We suppose also that the plausibility relation between the sets of resources is a lifting
 (an extension, see \cite{BBP04}) of the relation between the resources with very few properties.

With these tools  we use the {\em dominance plausible rule} -a sort of qualitative expected utility introduced by Dubois et al. \cite{DFPP02,DFP03}- in order to compare two allocations. We show with easy matricial representations that   there is a family of optimal allocations. We propose a notion of {\em simple rational deal} and we prove that we can reach  optimal allocations
by sequences of simple rational deals. 

This work is structured as follows. Section \ref{preli} contains the basic notions. Section \ref{classifying} contains the notion of preference between allocations based on a plausibility relation between sets of resources {which in turn is} based on a  qualitative notion of social welfare. Section \ref{deals} is devoted to the notion of simple rational deals. Section~\ref{good allocations} is devoted to define good resource allocations and to see that they can be reached
by sequences of simple rational deals. In Section~\ref{optimal}  we see that good resource allocations are, indeed, optimal with respect to the notion of qualitative welfare.
Section~\ref{discriminating} is devoted to a method to discriminate  optimal good allocations.
We finish with Section \ref{conclu} in which we make some concluding remarks and present some lines of future work.

\section{Preliminaries}\label{preli}
In this section, we show the preliminary concepts and notation that we will use throughout the article.
Given a nonempty set $X$  and a binary relation $\succeq$ defined  on $X$. \textit{The strict relation } $\succ$ associated to $\succeq$ is  defined as 
\begin{equation}\label{eq-relation-stric}
a\succ b\ssi [(a\succeq b) \,\wedge \, \neg(b\succeq a)],
\end{equation}
the \textit{ indifferent relation} $\backsimeq$ is defined as \begin{equation}\label{eq-relation-indif}
a \backsimeq b\ssi [(a\succeq b) \,\wedge \, (b\succeq a)].
\end{equation}

If $A\subseteq X$, we denote the maximal elements of $A$ as $\max(A)$ and analogously we denote
the minimal elements of $A$ as $\min(A)$

If $X$ is finite, $X=\{x_1,\cdots,x_n\}$, the relation $\succeq$ defines a \textit{relation matrix  }  as:\begin{equation}\label{eq-matriz-relation}
X_{ij}=\left\{
                                          \begin{array}{ll}
                                                           1, & \hbox{if}\, x_i\succeq x_j; \\
                                                           0, & \hbox{otherwise.}
                                                         \end{array}
                                                   \right.
\end{equation}

Note that the previous matrix depends upon the order of the elements of $X$. But any reordering of the elements of $X$ has an associated relation matrix with respect to $\succeq$ closely related to the first matrix. More precisely, if  $(X_{ij})_{n\times n}$ and $(Y_{ij})_{n\times n}$ are matrices associated to $\succeq$ with two different orders of $X$, then there exists a permutation $P$, of the identity matrix  $I_n$ of orden $n$, such that $$(Y_{ij})_{n\times n}=P(X_{ij})_{n\times n}$$

A relation is a total pre-order on $X$ if it is total and transitive. A relation is a linear order if it is irreflexive and transitive.

We start by the defining the two main sets to be studied. These are the set of agents and the set of resources.
We will denote the set of agents by $\A$ which is defined as a finite, nonempty set with $|\A|=q$, i.e,  $\A=\{1,2,\cdots,q\}$.
We will assume there exists a total pre-order $\hmi$ defined on $\A$. 
Given two agents $i,j\in \A$, the symbol $i\hmi j$ will be interpreted as: \emph{agent $i$ has a higher or equal hierarchy than agent $j$}.
The strict relation $\hm$ associated to $\hmi$  defined as  \eqref{eq-relation-stric} can be thought as: \textit{agent $i$ has greater hierarchy than agent $j$}.
The indifferent relation $\hi$ defined as in \eqref{eq-relation-indif} can be interpreted as:  \textit{agent  $i$ has the same hierarchy as agent $j$}.
Sometimes we will abuse the notation and denote by $i\in \A$ an agent and at the same time a natural number between 1 and $q$.
A relation matrix associated to $\hmi$,  defined as  in equation \eqref{eq-matriz-relation} will be  called \emph{a hierarchy matrix} and denoted by
$\A_h=\ds(a_{ij})_{q\times q}$.
The set of \textit{resources} will be denoted as $\R=\{r_1,\cdots,r_k\}$. We will assume that a total pre-order $\miplr$, is defined on $\R$.
The strict and indifferent relations $\mplr$ and $\iplr$, are defined in similar way as in equations \eqref{eq-relation-stric} and \eqref{eq-relation-indif}, respectively.
Based on the relation  $\miplr$ we will later define a  binary relation $\sqsupseteq$ on the set of subsets of $\R$, denoted as $\mathcal{P}(\R)$. Such relation should be a \emph{lifting} of $\miplr$, in the sense that it should preserve the order $\miplr$ when restricted to the the singletons.  Such relation will be understood as a `plausibility' relation on $\partes \R$.


Once defined the sets $\A$ and $\R$, we will assume that each agent is interested in some resources. 
This relation can be recorded in the following  \emph{matrix of request}  $\Pref=\ds(P_{in})_{q\times k}$ that contains the information about the resources each agent  requests.
\begin{equation}\label{eq-matriz-de-preferencia}
P_{in}=\left\{
                                                         \begin{array}{ll}
                                                           1, & \hbox{if}\,\, \hbox{agent}\, i\, \hbox{requests the resource }r_n; \\
                                                           0, & \hbox{otherwise.}
                                                         \end{array}
                                                       \right.
\end{equation}
\begin{remark}\label{remark-notacion-prefe}
Sometimes, we will refer to a resource $r\in\R$ without specifying the subindex. In such cases, $P_{ir}$ {denotes} the entry of the matrix $\Pref$ in the $i$-th row and the column associated to the resource $r$.
\end{remark}

We make the assumptions that the agents either request or do not request a resource, that this is known beforehand, and that it is invariant. Also, we assume that an agent does not have different levels of preference over the resources. This is a subject that could be addressed in future research. Under this conditions, we notice that the $i-$th row of $\Pref$ contains complete information about the resources that agent $i$ requests, whereas the $n-$th column determines all the agents that request the resource $r_n$. 



We are now ready to define the main concept of the article. A \textit{resource allocation} is a distribution of the resources among the agents where each resource is granted to one and only one agent. We introduce a matrix notation   that will be used throughout the rest of the article.
\begin{defi}
  Let $\A$ and $\R$ be nonempty sets, with $k=|\R|$ and $q=|\A|$. A resource allocation  is a matrix 
  $$\F=(f_{in})_{q\times k}$$ such that for each   $1\leq n \leq k$, there exists a unique $1\leq i\leq q$ for which  $f_{in}=1$ and $f_{jn}=0$ for $j\neq i$.
\end{defi}
Notice that if $f_{in}=1$ then it is understood that agent $i$ is the only one who was granted the resource $r_n$.


Similarly to the notation used with the matrix of requests, we will write $f_{ir}$ to denote the entry of the matrix $\F$ corresponding to the $i$-th row and the column associated to the resource $r$.

Sometimes we will use a set of resource allocations  $\F_1,\dots, \F_m$. In such case, we will use the notation $\F_l=(f^l_{in})_{q\times k}$ when necessary.\\

Given a fixed resource $r\in\R$, we define a ``priority relation'' on $\A$ associated to $r$ in the following way: \textit{agent $i$ has a higher priority than agent $j$} if and only if \textit{agent $i$ has equal or grater hierarchy than agent   $j$ and agent $i $ requests the resource $r$, or agent $i$ does not have equal or greater hierarchy than agent  $j$ and agent $j$ does not request the resource $r$.}

\begin{defi}
Given $r\in \R$,  \emph{the priority matrix associated to} $r$, is the matrix  $(a^{r}_{ij})_{q\times q}$, where
\begin{equation}\label{eq-matriz-prioridad}
  a^{r}_{ij}=a_{ij}P_{ir}+(1-a_{ij})(1-P_{jr})
\end{equation}
\end{defi}

Notice that each term of equation \eqref{eq-matriz-prioridad} is either one or zero. If $a_{ij}P_{ir}=1$ then agent $i$ has a higher hierarchy than agent $j$ and agent $i$ requests resource $r$. On the other hand, if $(1-a_{ij})(1-P_{jr})=1$, then $a_{ij}=0$ and $P_{jr}=0$. This implies  agent $i$ doest not have a higher hierarchy than agent $j$ and agent $j$ does not request resource $r$.
This definition captures that on \cite{PVC16} were the hierarchy order is assumed to be a linear order.



\section{Classifying allocations}\label{classifying}


As a first step in classifying resource allocations, we want to  define  an equivalence relation among allocations.  Then we will define the set where one resource allocation  dominates  another.




\begin{defi}\label{def-colu-equiv}
Let $r\in \mathcal{R}$
and let $c$ and $c^*$ be two canonical vectors of size $q$. Suppose that  $i, j\in \A$ are such that the $i$-th entry of $c$ is equal to 1 and the $j$-th entry of $c^*$ is equal to 1.   We will say that $c$ and $c^*$ are $r$-equivalent, denoted as $c\equiv_{r} c^*$ if  agents $i$ and $j$ have the same hierarchy and the same preference over the resource $r$. In other words,

\begin{equation}\label{eq-vect-equi}
c\equiv_{r} c^*\ssi (a_{ij}=a_{ji}=1) \wedge (P_{ir}=P_{jr})
\end{equation}

\end{defi}


Let  $\F$ and $\G$ be two resource allocations and define the set of all non-$r$-equivalent columns as
\begin{equation}\label{eq-col-no equi}
D_{\F\G}=\{r\in \R: c^{\F}_r \not\equiv_{r}  c^{\G}_{r}\}
\end{equation} where $c^{\F}_r$ and $c^{\G}_r$ denote the columns of $\F$ and $\G$, respectively,  associated to the resource $r$. Notice that $D_{\F\G}=D_{\G\F}$.


We will define a set where the resource allocation $\F$ \emph{dominates} the resource allocation $\G$. Roughly, this will be the set of resources that are assigned under $\F$ to an agent with  higher priority than if assigned under $\G$.
\begin{defi}\label{conjunto-dominante}
  Let $\F=(f_{in})$ and $\G=(g_{in})$ be two resource allocations. The set $[\F\succ \G]$ where $\F$ dominates $\G$, is defined as:
\begin{equation}\label{eq-conjunto-dominante}
[\F\succ \G]=\{r\in D_{\F\G}: a^{r}_{ij}=1,\, \mbox{where}\,f_{i r}=1\,\mbox{and}\,\, g_{j r}=1 \}
\end{equation}
\end{defi}

In other words,  $r\in [\F\succ \G]$ if and only if for the unique agents $i$ and $j$ such that $ f_{ir}=1$ and  $ g_{jr}=1$ it holds that $ a^{r}_{ij}=1$ and the columns $(f_{ir})_{1\leq i\leq q}$ and $(g_{ir})_{1\leq i\leq q}$ are not $r$-equivalent.
.

Once we have  defined  dominant sets as subsets of $\R$, we will define a  way of comparing resource allocations.
The idea is to capture a notion of social welfare \cite{Mou88,Mou03,EMST06} used to compare allocations via a notion of qualitative welfare.
The notion we use next is inspired of the {\em dominance plausible rule} proposed by Dubois el al. \cite{DFPP02,DFP03}.

\begin{defi}
Let $\sqsupseteq$  a plausibility relation on the set $\mathcal{P}(\R)$. We say that the resource allocation $\F$ produces more social welfare than a resource allocation $\G$, denoted as $\F\unrhd \G$, if
\begin{equation}
[\F\succ \G]\sqsupseteq[\G\succ \F].
\end{equation}
Moreover, we will say that $\F\rhd \G$ if $\F\unrhd \G$ and $\neg(\G \unrhd \F)$.
\end{defi}

Notice that the previous definition depends on a binary relation on $\mathcal{P}(\R)$ that, for now, could be arbitrary. A similar approach was studied in \cite{PVC16} where a strict possibilistic relation was considered for the case in which $\miplr$ was a  linear order.
Indeed our relation $\sqsupseteq$ will be an extension (see \cite{BBP04} for a study of the ways to do extensions and their properties) of the relation $\miplr$ having very few properties.

We are ready to define the notion of optimality among resource allocations. Notice that such definition will depend on the binary relation $\sqsupseteq$.

\begin{defi}\label{defi_optimal}
Given a binary relation  $\sqsupseteq$ on  the set  $\mathcal{P}(\R)$. We will say that a resource allocation $\F^*$ is optimal if $\F^* \mirec \G$ for every resource allocation $\G$.
\end{defi}

Here it is relevant to say that optimal resource allocations need not to be unique since the previous definition does not distinguish among equivalent allocations.


\section{Simple  and rational deals}\label{deals}

In order to search for an optimal resource allocation, we will first  introduce  negotiation concepts.
By a deal we will understand a pair of allocations that are equal except for only one column. We will define the concept in a more rigorous way by making use of permutation matrices.

Denote as  $I_q$ the identity matrix of size $q\times  q$. A permutation matrix $E_{ij}$ is the result of interchanging the  rows $i$ and $j$ of $I_q$.
Clearly, $E_{ij}=E_{ji}$ .

\begin{defi}\label{def-trato-simple}
Given two resource allocations $\F$ and $\G$ such that $\F\neq\G$, we say that the pair $(\F,\G)$ is a simple deal if there exists a unique $r\in\R$ and a permutation matrix $E_{ij}\neq I_q$, such that if  $\F=[ c_1,\cdots,c_{r-1},c_r,c_{r+1}\cdots c_k]$, where $c_1,\dots, c_k$ denote the columns of $F$, then
$$\G=[ c_1,\cdots,c_{r-1},E_{ij}c_r,c_{r+1}\cdots c_k].$$
\end{defi}

\begin{lema} If  $(\F,\G)$ is a simple deal, then the permutation matrix $E_{ij}$ is unique.

\end{lema}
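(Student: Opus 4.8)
The plan is to exploit the fact that, in a simple deal, the columns being permuted are canonical vectors, so the only freedom lies in which two coordinates a transposition swaps. First I would fix the unique resource $r\in\R$ guaranteed by the definition: it is the single column at which $\F$ and $\G$ disagree. Writing $c_r$ for the $r$-th column of $\F$, this column is a canonical vector, say $c_r=e_i$ (the $1$ sits in row $i$, meaning agent $i$ receives $r$ under $\F$), and the $r$-th column of $\G$ is likewise a canonical vector $e_j$. Since $\F\neq\G$ and the two allocations coincide on every column other than $r$, we must have $e_i\neq e_j$, that is $i\neq j$; this is also why any admissible permutation matrix differs from $I_q$.

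Next I would record how a transposition matrix acts on a canonical vector. Because $E_{ab}$ is obtained from $I_q$ by interchanging rows $a$ and $b$, its action on $e_m$ is $E_{ab}e_m=e_m$ when $m\notin\{a,b\}$, while $E_{ab}e_a=e_b$ and $E_{ab}e_b=e_a$. This is the only computational ingredient needed.

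Then I would argue uniqueness directly. Suppose $E_{ab}$ and $E_{cd}$ are two permutation matrices of the admissible form, both distinct from $I_q$, each realizing the deal on column $r$. By the definition of simple deal both must map $c_r$ to the $r$-th column of $\G$, so $E_{ab}e_i=e_j$ and $E_{cd}e_i=e_j$. Using the action above, $E_{ab}e_i=e_j$ with $j\neq i$ forces $i\in\{a,b\}$ (otherwise $E_{ab}e_i=e_i\neq e_j$) and forces the remaining index of the pair to equal $j$; hence $\{a,b\}=\{i,j\}$ and $E_{ab}=E_{ij}$. The same reasoning gives $\{c,d\}=\{i,j\}$, so $E_{cd}=E_{ij}=E_{ab}$, which establishes the claim.

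The argument is short and I do not expect a genuine obstacle; the only point demanding care is the small case analysis on the action of a transposition on $e_i$, ensuring that the hypothesis $i\neq j$ (equivalently that the admissible matrix is $\neq I_q$) really excludes the degenerate possibility of a transposition that fixes $e_i$ masquerading as a realization of the deal.
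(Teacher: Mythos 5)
Your proof is correct and takes essentially the same approach as the paper's: identify the unique differing column of $\F$ and $\G$ as canonical vectors $e_i$ and $e_j$ with $i\neq j$, and observe that a transposition matrix sends $e_i$ to $e_j$ only if it swaps exactly the indices $i$ and $j$, forcing the matrix to be $E_{ij}$. If anything, your explicit case analysis of how $E_{ab}$ acts on $e_i$ (and your phrasing in terms of the unordered pair $\{a,b\}=\{i,j\}$) is slightly more careful than the paper's argument, which states the conclusion via ``$i^*\neq i$ or $j^*\neq j$'' and leaves the symmetry $E_{ij}=E_{ji}$ implicit.
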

\begin{proof}
Suppose that $\F=(f_{in})_{q\times k}$, and $\G=(g_{in})_{q\times k}$ are two resource allocations such that $(\F,\G)$ is a simple deal, then there exists a unique resource $r$ such that $\F$ and $\G$ differ only in the column associated to resource $r$. Let $i\in \A$ and $j\in\A$ be such that $f_{ir}=1$ and $g_{jr}=1$, then $f_{i^*r}=0$ for all $i^*\neq i$ and $g_{j^*r}=0$ for all $j^*\neq j$.

Now if $E_{i^*j^*}$, is another permutation matrix, with $i^*\neq i$ or $j^*\neq j$, then either $f_{i^*r}=0$ or  $g_{j^*r}=0$  and consequently $E_{i^*j^*}c_r\neq  E_{ij}c_r$, where $c_r$ is the $r$-th column of $\F$.
\end{proof}

Using the notation of the previous lemma, we will say that the permutation matrix $E_{ij}$ \emph{represents} the simple deal $(\F,\G)$ associated to the resource $r\in\R$.



\begin{defi}\label{def-trato-racional-simple}
Given two resource allocations $\F$ and $\G$, we will say that $(\F,\G)$ is a rational deal if it is a simple deal represented by the permutation matrix $E_{ij}$, and if one of the following conditions holds.

\begin{equation}\label{condicion1-trato-racional}
  (a_{ji}=1)\wedge (P_{jr}=1)
\end{equation}
\noindent or
\begin{equation}\label{condicion2-trato-racional}
 (a_{ij}=1) \wedge  (\forall t,\, (a_{t i}=1\Rightarrow P_{tr}=0))
\end{equation}
\end{defi}



%

Condition \eqref{condicion1-trato-racional} means that agent $j$ has a higher or equal hierarchy than $i$ and moreover, requests resource $r$.
  Condition \eqref{condicion2-trato-racional} says that agent $i$  has a higher or equal hierarchy than $j$, that $i$ does not request the resource $r$ and no other agent $t$ with higher or equal hierarchy as $i$ requests the resource $r$.


\begin{defi}
A sequence of resource allocations $\F_1$,$\F_2,\dots$,$\F_m$ is called \emph{sequence of rational deals}  if for every $1\leq i\leq m-1$, the pair $(\F_i, \F_{i+1})$ is a rational deal.
\end{defi}

The following lemma and  the corollary that follows will be  needed in Section 6.


\begin{lema}\label{lema-matriz-permut-racional}
  Suppose that the permutation matrices $E_{ij}$ y $E_{i^*j^*}$ represent, respectively,  the rational deals $(\F_1,\F_2)$ and $(\F_2,\F_3)$ associated to the same resource $r\in\R$. Then the product $E_{ij^*}= E_{i^*j^*}E_{ij}$ also represents a rational deal associated to $r$.
\end{lema}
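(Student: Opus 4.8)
The plan is to first analyze how the two deals chain together through the intermediate allocation, then argue that the composite is again a simple deal, and finally verify rationality by a case analysis on the hierarchy preorder $\hmi$.

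First I would locate the agent holding $r$ in the intermediate allocation $\F_2$. Since $E_{ij}$ represents $(\F_1,\F_2)$ associated to $r$, agent $i$ holds $r$ in $\F_1$ and agent $j$ holds $r$ in $\F_2$; since $E_{i^*j^*}$ represents $(\F_2,\F_3)$ associated to $r$, agent $i^*$ holds $r$ in $\F_2$. Because the agent holding a fixed resource in an allocation is unique, $i^*=j$ and hence $E_{i^*j^*}=E_{jj^*}$. The allocations $\F_1$ and $\F_3$ then agree in every column except possibly the one associated to $r$, and on that column the single $1$ of $\F_1$, located in row $i$, is carried by $E_{ij}$ to row $j$ and then by $E_{jj^*}$ to row $j^*$; this is exactly the effect of $E_{ij^*}$. (The product $E_{jj^*}E_{ij}$ is in general a $3$-cycle, so the asserted identity should be read as equality of the actions on the $r$-th column of $\F_1$, which is all that is needed to represent the deal.) Thus, provided $i\neq j^*$, the pair $(\F_1,\F_3)$ is a simple deal represented by $E_{ij^*}$ transferring $r$ from $i$ to $j^*$; if $i=j^*$ then $\F_1=\F_3$ and there is nothing to prove.

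It remains to check that $(\F_1,\F_3)$ is rational, i.e.\ that the indices $(i,j^*)$ satisfy \eqref{condicion1-trato-racional} or \eqref{condicion2-trato-racional}. Rationality of $(\F_1,\F_2)$ gives, for $(i,j)$, either (A1) $a_{ji}=1\wedge P_{jr}=1$ or (A2) $a_{ij}=1\wedge\forall t\,(a_{ti}=1\Rightarrow P_{tr}=0)$; rationality of $(\F_2,\F_3)$ gives, for $(j,j^*)$, either (B1) $a_{j^*j}=1\wedge P_{j^*r}=1$ or (B2) $a_{jj^*}=1\wedge\forall t\,(a_{tj}=1\Rightarrow P_{tr}=0)$. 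I would then dispose of the four combinations using only that $\hmi$ is reflexive, transitive and total. In case (A1)\,\&\,(B1), transitivity on $a_{j^*j}=1$ and $a_{ji}=1$ yields $a_{j^*i}=1$, which with $P_{j^*r}=1$ is \eqref{condicion1-trato-racional} for $(i,j^*)$. Case (A1)\,\&\,(B2) is vacuous, since setting $t=j$ in (B2) and using $a_{jj}=1$ forces $P_{jr}=0$, contradicting (A1). In case (A2)\,\&\,(B2), transitivity on $a_{ij}=1$ and $a_{jj^*}=1$ gives $a_{ij^*}=1$, and the universal clause of (A2) is exactly the one required, so \eqref{condicion2-trato-racional} holds for $(i,j^*)$.

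The main obstacle is the remaining case (A2)\,\&\,(B1), where neither condition transfers directly; here I would split on totality of $\hmi$ between $i$ and $j^*$. If $a_{j^*i}=1$, then with $P_{j^*r}=1$ we obtain \eqref{condicion1-trato-racional}; otherwise $a_{ij^*}=1$, and since (A2) already supplies $\forall t\,(a_{ti}=1\Rightarrow P_{tr}=0)$ we obtain \eqref{condicion2-trato-racional}. Having covered all four combinations, $(\F_1,\F_3)$ is a rational deal represented by $E_{ij^*}$, as claimed. Apart from this totality split, the only delicate points are the identification $i^*=j$ and reading the matrix equality as equality of actions on the relevant column; the transitivity computations themselves are routine.
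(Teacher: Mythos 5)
Your proof is correct and its skeleton matches the paper's: use uniqueness of the agent holding $r$ in $\F_2$ to get $i^*=j$, check that $(\F_1,\F_3)$ is a simple deal whose $r$-column is $E_{ij^*}c_r$, then do a case analysis on the four combinations of rationality conditions for the two deals. Your cases (A1)\,\&\,(B1), (A2)\,\&\,(B2) and (A1)\,\&\,(B2) are handled exactly as in the paper (transitivity, transitivity, and the reflexivity contradiction $P_{jr}=1$ versus $P_{jr}=0$, respectively).

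Where you genuinely diverge is the mixed case (A2)\,\&\,(B1), the first deal rational via \eqref{condicion2-trato-racional} and the second via \eqref{condicion1-trato-racional}, and here your treatment is not only correct but repairs a flaw in the paper's own argument. The paper claims this combination is contradictory, asserting $a_{j^*i}=1$ and then deriving $P_{j^*r}=0$ against $P_{j^*r}=1$; but $a_{j^*i}=1$ does not follow from the hypotheses (from $a_{ij}=1$ and $a_{j^*j}=1$ transitivity gives no relation between $i$ and $j^*$), and the combination can in fact occur: take a strict three-agent hierarchy with $a_{ij^*}=a_{j^*j}=1$ and $a_{j^*i}=0$, where $i$ requests nothing and $j^*$ requests $r$; then both deals are rational as in (A2) and (B1) and no contradiction arises. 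Your split on totality of $\hmi$ settles this case cleanly: if $a_{j^*i}=1$ then \eqref{condicion1-trato-racional} holds for $(i,j^*)$ because (B1) gives $P_{j^*r}=1$, and otherwise $a_{ij^*}=1$ and \eqref{condicion2-trato-racional} holds for $(i,j^*)$ because (A2) supplies exactly the required clause $\forall t\,(a_{ti}=1\Rightarrow P_{tr}=0)$. Your two parenthetical cautions are also well taken: as matrices $E_{i^*j^*}E_{ij}$ is a $3$-cycle rather than the transposition $E_{ij^*}$, so the stated identity must be read (as the paper implicitly does) as equality of actions on the column $c_r$; and the degenerate possibility $j^*=i$ (which can really occur, e.g.\ two agents of equal hierarchy who both request $r$ trading it back and forth) yields $\F_3=\F_1$, which is not a simple deal under the paper's definition, so it has to be excluded or treated as vacuous, as you do.
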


\begin{proof}
  We will show that the matrix $E=E_{i^*j^*}E_{ij}$ represents the rational deal $(\F_1,\F_3)$ associated to $r$.
First, notice that $i^*=j$ and  $E=E_{ij^*}$.

Suppose that $\F_1=[c_1,\dots,c_r,\dots,c_k]$, then since $(\F_1,\F_2)$ and $(\F_2,\F_3)$ are simple deals, we can write
\[\F_2=[c_1,\dots,E_{ij}c_r,\dots,c_k]\]  and  \[\F_3=[c_1,\dots,E_{i^*j^*} E_{ij}c_r,\dots,c_k]=[c_1,\dots,E_{ij^*} c_r,\dots,c_k].\] In consequence, $(\F_1,\F_3)$ is a simple deal.

%
%
%

It remains to show that one of the following conditions hold:
\begin{equation}\label{eq-condicion1-lema-permu-trato-racional}
  (a_{j^*i}=1)\wedge (P_{j^*r}=1)
\end{equation}
or
\begin{equation}\label{eq-condicion2-lema-permu-trato-racional}
 (a_{ij^*}=1)  \wedge (\forall t,\, (a_{t i}=1\Rightarrow P_{t r}=0)).
\end{equation}
 But since $(\F_1,\F_2)$ and $(\F_2,\F_3)$  are rational deals, we have
\begin{enumerate}
  \item $E_{ij}$ represents $(\F_1,\F_2)$ and one of the following conditions hold:
\begin{enumerate}
\item\label{i-condicion1-trato-1-lema} $(a_{ji}=1)\,\wedge \,(P_{j r}=1)$, or
\item\label{ii-condicion2-trato-1-lema}
 $(a_{ij}=1)\,  \wedge \,(\forall t,\, (a_{t i}=1\Rightarrow P_{t r}=0))$.
\end{enumerate}
  \item  $E_{i^*j^*}$ represents $(\F_2,\F_3)$  and one of the following conditions hold:
\begin{enumerate}
\item\label{a-condicion1-trato-1-lema} $(a_{j^*i^*}=1)\,\wedge \, (P_{j^*r}=1)$, or
\item\label{b-condicion2-trato-1-lema} $ (a_{i^*j^*}=1)   \wedge (\forall t,\, (a_{t i^*}=1\Rightarrow P_{t r}=0))$.
\end{enumerate}
\end{enumerate}

If (\ref{i-condicion1-trato-1-lema}) and (\ref{a-condicion1-trato-1-lema}) hold, then    $a_{ji}=1$ and $a_{j^*i^*}=1$ and since the relation $\hmi$ is transitive and $i^*=j$, then $a_{j^*i}=1$. Moreover, from (\ref{a-condicion1-trato-1-lema}), we have that $P_{j^*r}=1$ and consequently (\ref{eq-condicion1-lema-permu-trato-racional}) holds. \\
Now suppose that  (\ref{ii-condicion2-trato-1-lema}) and (\ref{b-condicion2-trato-1-lema}) hold. Again by transitivity of $\hmi$ we have that $a_{ij^*}=1$.
Moreover, condition (\ref{b-condicion2-trato-1-lema}) implies (\ref{eq-condicion2-lema-permu-trato-racional}).\\
If (\ref{i-condicion1-trato-1-lema}) and (\ref{b-condicion2-trato-1-lema}) hold, then
$P_{j r}=1$  and $P_{i^* r}=P_{j r}=0$ which is a contradiction.\\
Similarly, if  (\ref{ii-condicion2-trato-1-lema}) and (\ref{a-condicion1-trato-1-lema}) hold, then $a_{j^*i}=1$ and $ P_{j^* r}=1$ and taking $t=j^*$ in (\ref{a-condicion1-trato-1-lema}), we have that $P_{j^* r}=0$. Again a contradiction.\\
Hence, neither conditions (\ref{i-condicion1-trato-1-lema}) and (\ref{b-condicion2-trato-1-lema}) nor conditions (\ref{ii-condicion2-trato-1-lema}) and (\ref{a-condicion1-trato-1-lema}) occur at the same time. This finishes the proof.


\end{proof}

The previous lemma can be generalized using mathematical induction to obtain the following result.

\begin{coro}\label{coro-matrices-perm-racional}
  Suppose that the permutation matrices $E_{i_1j_1}, E_{i_2j_2},\dots,$ $E_{i_mj_m}$ represent consecutive rational deals associated to the same resource $r$. Then  $j_1=i_2$, $j_2=i_3,\cdots, j_{m-1}=i_m$, and $E_{i_1j_m}=E_{i_mj_m}\cdots E_{i_2j_2} E_{i_1j_1}$ is a permutation matrix that also represents a rational deal associated to the resource $r$.
\end{coro}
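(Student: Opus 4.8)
The plan is to prove the corollary by induction on $m$, using Lemma \ref{lema-matriz-permut-racional} as the base case and the key engine of the inductive step. For $m=2$ the statement is exactly the content of the previous lemma, which already establishes both that $j_1=i_2$ (the conclusion $i^*=j$ in the lemma's notation) and that the product $E_{i_2j_2}E_{i_1j_1}=E_{i_1j_2}$ represents a rational deal associated to $r$. So the work is entirely in carrying the induction forward.

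For the inductive step, suppose the result holds for $m-1$ consecutive rational deals. Given permutation matrices $E_{i_1j_1},\dots,E_{i_mj_m}$ representing consecutive rational deals $(\F_1,\F_2),\dots,(\F_m,\F_{m+1})$ associated to $r$, I would first apply the induction hypothesis to the initial segment $E_{i_1j_1},\dots,E_{i_{m-1}j_{m-1}}$. This yields the chain of equalities $j_1=i_2,\dots,j_{m-2}=i_{m-1}$ and produces a single permutation matrix $E_{i_1j_{m-1}}=E_{i_{m-1}j_{m-1}}\cdots E_{i_1j_1}$ that represents a rational deal $(\F_1,\F_{m})$ associated to $r$. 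Now I have reduced the situation to exactly two consecutive rational deals: $(\F_1,\F_{m})$ represented by $E_{i_1j_{m-1}}$, and $(\F_m,\F_{m+1})$ represented by $E_{i_mj_m}$. Applying Lemma \ref{lema-matriz-permut-racional} to this pair gives $j_{m-1}=i_m$ and that the product $E_{i_mj_m}E_{i_1j_{m-1}}=E_{i_1j_m}$ represents a rational deal $(\F_1,\F_{m+1})$ associated to $r$. Combining this new equality $j_{m-1}=i_m$ with the chain obtained from the induction hypothesis completes the full chain $j_1=i_2,\dots,j_{m-1}=i_m$, and associativity of matrix multiplication gives $E_{i_1j_m}=E_{i_mj_m}\cdots E_{i_1j_1}$, as required.

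The one point that requires a little care is verifying that the induction hypothesis is genuinely applicable, namely that the collapsed deal $(\F_1,\F_m)$ together with $(\F_m,\F_{m+1})$ really does form a pair of \emph{consecutive} rational deals on which Lemma \ref{lema-matriz-permut-racional} can be invoked. This is where I expect the main (though modest) obstacle to lie: the lemma is stated for rational deals $(\F_1,\F_2)$ and $(\F_2,\F_3)$ sharing the intermediate allocation $\F_2$, so I must make sure that the allocation produced by the induction hypothesis is precisely the same $\F_m$ that serves as the left member of the deal $(\F_m,\F_{m+1})$. Since all deals act only on the column associated to $r$ and leave the remaining columns fixed, and since the induction hypothesis asserts that $E_{i_1j_{m-1}}$ transforms the $r$-column of $\F_1$ into the $r$-column of $\F_m$, this matching is automatic; but it is worth stating explicitly so that the hypotheses of the lemma are unambiguously met. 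Once this is observed, the induction closes cleanly with no further computation.
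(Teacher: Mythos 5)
Your proposal is correct and is exactly the argument the paper intends: the paper's entire proof of this corollary is the single sentence that the previous lemma ``can be generalized using mathematical induction,'' and your induction---base case $m=2$ given by Lemma \ref{lema-matriz-permut-racional}, inductive step collapsing the first $m-1$ deals via the induction hypothesis into a single rational deal $(\F_1,\F_m)$ and then applying the lemma once more to the pair $(\F_1,\F_m)$, $(\F_m,\F_{m+1})$---is precisely the omitted induction, including the observation that the collapsed deal shares the intermediate allocation $\F_m$ so the lemma's hypotheses are met.
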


\section{Good  allocations}\label{good allocations}

Our goal will be to construct an algorithm to find optimal resource allocations. Before, we will define what ``good'' resource allocations are.

\begin{defi}\label{def-col-y-asig-buenas}
  Let $\F=(f_{i n})_{q\times k}$ be a resource allocation and fix $r\in\R$. Suppose  $i\in \A$ is the only agent such that $f_{i r}=1$.
We will say that the column $c_r=(f_{i r})_{1 \leq i\leq q}$, associated to the resource $r$, is in \emph{good position} if
\begin{equation}\label{defi-posicio-optima-cond1}
  (P_{i r}=1)\wedge(\forall t ( a_{t i}=1\wedge a_{i t}=0)\Rightarrow P_{tr}=0))
\end{equation}
or
\begin{equation}\label{defi-posicio-optima-cond2}
  (\forall t  (P_{t r}=0))\wedge (i\in\min(\A))
\end{equation}
where $\min(\A)=\{t\in \A: a_{it}=1, \forall i\in \A\}$.

We will say the $\F$ is a \emph{good resource allocation} if all its columns are in good position.
\end{defi}

Condition  (\ref{defi-posicio-optima-cond1})  can be interpreted as: Agent $i$ {requests}  the resource $r$ and every other agent with higher hierarchy does not request it.  Condition (\ref{defi-posicio-optima-cond2}) says that agent $i$ has the lowest possible hierarchy and no agents {requests} the resource $r$. A good resource allocation will then be an allocation in which each resource is either assigned to an agent that  {requests} it and has the higher hierarchy among those that {request} the resource, or to an agent of the lowest hierarchy if no agent {requests} the resource.


%
The following lemma says that a column that is in good position cannot be further ``improved'' by using a rational deal.

\begin{lema}
  Suppose that the $r$-th column $c_r$ of a resource allocation $\mathcal{F}$ is in good position and that there exists a resource allocation $\G$ such that $(\F,\G)$ is a rational deal associated to $c_r$, represented by the matrix $E_{ij}$. Then $c_r$ and $E_{ij}c_r$ are equivalent.
\end{lema}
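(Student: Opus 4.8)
The plan is to run a finite case analysis over the two possible reasons the column $c_r$ is in good position (conditions \eqref{defi-posicio-optima-cond1} and \eqref{defi-posicio-optima-cond2}) against the two possible witnesses that $(\F,\G)$ is a rational deal (conditions \eqref{condicion1-trato-racional} and \eqref{condicion2-trato-racional}). Throughout, write $i$ for the unique agent with $f_{ir}=1$, so that $c_r$ is the $i$-th canonical vector and, since $E_{ij}$ represents the deal, $E_{ij}c_r$ is the $j$-th canonical vector. By Definition~\ref{def-colu-equiv}, to conclude $c_r\equiv_{r}E_{ij}c_r$ it suffices to establish exactly two facts: $a_{ij}=a_{ji}=1$ and $P_{ir}=P_{jr}$.

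First I would eliminate the two incompatible combinations. If good position holds via \eqref{defi-posicio-optima-cond2}, then $P_{tr}=0$ for every agent $t$, which directly contradicts the requirement $P_{jr}=1$ of \eqref{condicion1-trato-racional}; so this pairing is impossible. Symmetrically, if good position holds via \eqref{defi-posicio-optima-cond1}, then $P_{ir}=1$, whereas \eqref{condicion2-trato-racional} asserts $P_{tr}=0$ for every $t$ with $a_{ti}=1$; taking $t=i$ and using that $\hmi$ is a total pre-order (hence reflexive, $a_{ii}=1$) yields $P_{ir}=0$, a contradiction. Thus only the two ``matching'' pairings can occur.

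The remaining work is to verify the equivalence in the two surviving cases. Suppose good position holds via \eqref{defi-posicio-optima-cond1} and the deal is rational via \eqref{condicion1-trato-racional}. Then $P_{ir}=1$ and $P_{jr}=1$, so $P_{ir}=P_{jr}$ is immediate, and $a_{ji}=1$ is given; the only nontrivial point is $a_{ij}=1$, which is where I expect the real content to lie. I would argue by contradiction: if $a_{ij}=0$, then together with $a_{ji}=1$ this says $a_{ji}=1\wedge a_{ij}=0$, i.e.\ $j$ has strictly higher hierarchy than $i$; instantiating the second clause of \eqref{defi-posicio-optima-cond1} at $t=j$ then forces $P_{jr}=0$, contradicting $P_{jr}=1$. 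Hence $a_{ij}=1$, giving $a_{ij}=a_{ji}=1$ and $c_r\equiv_{r}E_{ij}c_r$. In the other surviving case, good position via \eqref{defi-posicio-optima-cond2} and rationality via \eqref{condicion2-trato-racional}: condition \eqref{condicion2-trato-racional} supplies $a_{ij}=1$, while $i\in\min(\A)$ means $a_{ti}=1$ for all $t$ and in particular $a_{ji}=1$; moreover $P_{tr}=0$ for all $t$ gives $P_{ir}=P_{jr}=0$. Both requirements hold, so again $c_r\equiv_{r}E_{ij}c_r$.

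The only genuinely non-mechanical step is the contradiction argument in the first surviving case: the ``no strictly higher agent requests $r$'' half of \eqref{defi-posicio-optima-cond1} is precisely what upgrades the one-sided $a_{ji}=1$ into the two-sided indifference $a_{ij}=a_{ji}=1$ demanded by $\equiv_{r}$. Everything else is bookkeeping over the definitions, so I would present the two impossible pairings briefly and spend the detail on that single comparison.
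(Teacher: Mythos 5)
Your proof is correct, and it takes a genuinely different (and in one respect tighter) route than the paper's. The paper's proof uses totality of $\hmi$ to split on the hierarchy between $i$ and $j$: if $a_{ji}=1$ it asserts that condition \eqref{condicion1-trato-racional} must hold, derives $P_{jr}=1$, and then obtains $a_{ij}=1$ by the same contrapositive use of the second clause of \eqref{defi-posicio-optima-cond1} that you use; if $a_{ij}=1$ and $a_{ji}=0$ it derives a contradiction from \eqref{condicion2-trato-racional}, namely $P_{ir}=0$ and $i\notin\min(\A)$, against the two good-position clauses. Your decomposition instead enumerates the four pairings of (good-position witness) $\times$ (rational-deal witness), refutes the two mixed pairings, and verifies equivalence in the two matched ones. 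The shared core is identical: the elimination arguments and the upgrade of $a_{ji}=1$ to $a_{ij}=a_{ji}=1$ via \eqref{defi-posicio-optima-cond1}. What your organization buys is exhaustiveness by construction: the paper's step ``if $a_{ji}=1$ then \eqref{condicion1-trato-racional} holds'' is not actually justified --- when no agent requests $r$ and both $i$ and $j$ lie in $\min(\A)$, one has $a_{ji}=1$ while the deal can only be rational via \eqref{condicion2-trato-racional}, so \eqref{condicion1-trato-racional} fails. That sub-case is silently handled by your pairing of \eqref{defi-posicio-optima-cond2} with \eqref{condicion2-trato-racional}, where equivalence follows directly; so your enumeration closes a small logical gap that the paper's proof leaves open, while reaching the same conclusion.
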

\begin{proof}
Suppose $\F=[c_1,\dots,c_r,\cdots,c_k]$, we will prove that $a_{ij}=1=a_{ji}$ and $P_{ir}=P_{jr}$.  Since $\hmi$ is total we consider the following two cases:

If  $a_{ji}=1$, then condition (\ref{condicion1-trato-racional}) holds and consequently we have that $P_{jr}=1$  and condition  \eqref{defi-posicio-optima-cond2} does not hold.
But since $c_r$ is in good position, then \eqref{defi-posicio-optima-cond1} holds and consequently $P_{ir}=1 $ and  $a_{ij}=1$. Hence $c_r$ and $E_{ij}c_r$ are $r$-equivalent.

Condition  $a_{ij}=1$ and $a_{ji}= 0$ is impossible since if this were the case, then condition (\ref{condicion2-trato-racional}) holds and consequently  $P_{ir}=0$. Moreover $ i\notin\min(A)$.  This two facts contradict conditions   (\ref{defi-posicio-optima-cond1}) and (\ref{defi-posicio-optima-cond2}), respectively. This finishes the proof.
\end{proof}

%
The following lemma says that if the column involved in a simple deal turns out to be in good position, then the deal must be rational.

\begin{lema}\label{lema-trat-simpl+opt-impl-racional}
Suppose $\F$ and $\G$ are two resource allocations and that $(\F,\G)$ is a simple deal that converts column $c_r$ to $E_{ij}c_r$.  If $E_{ij}c_r$ is in good position and it is not equivalent to $c_r$, then $(\F,\G)$ is a rational deal.
\end{lema}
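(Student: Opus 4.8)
The plan is to fix the two agents involved and then argue by cases according to which of the two good-position conditions the target column $E_{ij}c_r$ satisfies, using the totality of $\hmi$ to split further. Since $(\F,\G)$ is a simple deal represented by $E_{ij}$, there is a unique agent $i$ with $f_{ir}=1$ (so $c_r$ places $r$ with agent $i$) and a unique agent $j$ with $g_{jr}=1$ (so $E_{ij}c_r$ places $r$ with agent $j$), and $i\neq j$. Because the good position of $E_{ij}c_r$ is the position of agent $j$, I would read \eqref{defi-posicio-optima-cond1} and \eqref{defi-posicio-optima-cond2} with $j$ playing the role of the distinguished agent. The goal is then to verify that one of \eqref{condicion1-trato-racional} or \eqref{condicion2-trato-racional} holds.

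First I would dispose of the case where $E_{ij}c_r$ is in good position via \eqref{defi-posicio-optima-cond2}: then $P_{tr}=0$ for every $t$ and $j\in\min(\A)$. Minimality of $j$ gives $a_{ij}=1$, and the universal vanishing of the requests makes the implication in \eqref{condicion2-trato-racional} vacuously true, so \eqref{condicion2-trato-racional} holds and the deal is rational.

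The substantive case is when $E_{ij}c_r$ satisfies \eqref{defi-posicio-optima-cond1}, i.e. $P_{jr}=1$ together with the assertion that every strict superior of $j$ fails to request $r$. Here I would split on $\hmi$. If $a_{ji}=1$, then together with $P_{jr}=1$ this is exactly \eqref{condicion1-trato-racional}. Otherwise totality forces $a_{ij}=1$ while $a_{ji}=0$, so $i$ is a strict superior of $j$; instantiating the good-position clause at $t=i$ already gives $P_{ir}=0$. To finish I must establish the universal clause of \eqref{condicion2-trato-racional}: for an arbitrary $t$ with $a_{ti}=1$, transitivity of $\hmi$ applied to $a_{ti}=1$ and $a_{ij}=1$ yields $a_{tj}=1$, while $a_{jt}=1$ would combine with $a_{ti}=1$ to give $a_{ji}=1$, contradicting $a_{ji}=0$; hence $t$ is a strict superior of $j$ and the good-position clause forces $P_{tr}=0$. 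Thus \eqref{condicion2-trato-racional} holds, and in every case the deal is rational.

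The only delicate point is this last transitivity bookkeeping: the good-position condition quantifies over \emph{strict} superiors of $j$ (both $a_{tj}=1$ and $a_{jt}=0$ are required), so I must extract strictness from the preorder rather than settling for $a_{tj}=1$, and this is precisely where $a_{ji}=0$ is used. The non-equivalence hypothesis plays only an auxiliary role: in the two subcases above it is automatic (for instance $P_{ir}=0\neq1=P_{jr}$ in the strict subcase), so it mainly serves to exclude the degenerate configuration $a_{ij}=a_{ji}=1$ with $P_{ir}=P_{jr}$ rather than being essential to deducing rationality.
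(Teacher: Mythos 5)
Your proof is correct, and it reorganizes the case analysis in a way that genuinely differs from the paper's. The paper splits first on the hierarchy --- case $a_{ji}=1$ versus case $a_{ij}=1\wedge a_{ji}=0$ --- and only afterwards looks at which good-position condition holds; in its first case it must rule out \eqref{defi-posicio-optima-cond2}, and it does so by invoking the non-equivalence hypothesis (if $j\in\min(\A)$, all requests vanish, and $a_{ji}=1$, then $c_r\equiv_{r}E_{ij}c_r$, a contradiction). You instead split first on the good-position condition, and this buys something real: when \eqref{defi-posicio-optima-cond2} holds you obtain \eqref{condicion2-trato-racional} outright ($a_{ij}=1$ follows from $j\in\min(\A)$, and the universal clause holds because every $P_{tr}=0$), and within \eqref{defi-posicio-optima-cond1} the totality split gives \eqref{condicion1-trato-racional} when $a_{ji}=1$ and, via the same transitivity bookkeeping as the paper, \eqref{condicion2-trato-racional} in the strict case. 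Consequently your argument never uses the non-equivalence hypothesis at all, so you have in fact proved the slightly stronger statement that good position of $E_{ij}c_r$ alone forces the simple deal to be rational. Two small quibbles that do not affect correctness: the implication in \eqref{condicion2-trato-racional} is true because its consequent always holds, not ``vacuously''; and your closing remark that non-equivalence is automatic ``in the two subcases'' is accurate only in the strict subcase $a_{ij}=1\wedge a_{ji}=0$ --- when $a_{ji}=1$ and \eqref{defi-posicio-optima-cond1} holds, an equivalent configuration ($a_{ij}=a_{ji}=1$ with $P_{ir}=P_{jr}=1$) is entirely possible, and the deal is still rational, which only reinforces your point that the hypothesis is dispensable.
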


\begin{proof}
We need to show that one of the conditions \eqref{condicion1-trato-racional} or \eqref{condicion2-trato-racional} hold. We will use again the totality of $\hmi$ and consider cases. Suppose that $a_{j i}=1$, then we have that $j\not\in\min(\A)$ since otherwise  $a_{i j}=1$ and  by condition \eqref{defi-posicio-optima-cond2} $P_{i r}=0=P_{j r}$, contradicting the hypothesis that $E_{ij}c_r$ and $c_r$ are not equivalent.
Hence, condition \eqref{defi-posicio-optima-cond1} holds and consequently $P_{jr}=1$ and \eqref{condicion1-trato-racional} holds.

Now suppose that $a_{i j}=1$ and $a_{j i}=0$,  if condition \eqref{defi-posicio-optima-cond2} holds, then in particular we have that condition \eqref{condicion2-trato-racional} holds. If on the contrary, condition   \eqref{defi-posicio-optima-cond1} holds, then $P_{jr}=1$ and $P_{ir}=0$  and moreover, for every $t\in\A$ such that $a_{ti}=1$ we have by transitivity that $a_{tj}=1$ and $a_{jt}=0$ which, by condition \eqref{defi-posicio-optima-cond1} implies that $P_{tn}=0$ proving that condition \eqref{condicion2-trato-racional} holds.
\end{proof}

\begin{remark}
The converse of the previous lemma is not generally true. That is, the resulting column involved in a rational deal, needs not to be in good position.
\end{remark}

\begin{lema}\label{lema-not-opt-impl-deal-rational-good} If $\F=[c_1,\cdots,c_r,\cdots c_k]$ is resource allocation such that
 the column $c_r$ is not good position, then  there exists a rational deal which
transforms it in a good column.
\end{lema}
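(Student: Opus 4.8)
The plan is to denote by $i$ the unique agent with $f_{ir}=1$ (so $c_r$ assigns $r$ to $i$) and to exhibit an agent $j\neq i$ such that the column assigning $r$ to $j$, namely $E_{ij}c_r$, is in good position and the passage from $i$ to $j$ meets one of the rationality conditions \eqref{condicion1-trato-racional} or \eqref{condicion2-trato-racional}. Taking $\G=[c_1,\dots,E_{ij}c_r,\dots,c_k]$ then yields the desired rational deal. Since $c_r$ is not in good position, agent $i$ satisfies neither \eqref{defi-posicio-optima-cond1} nor \eqref{defi-posicio-optima-cond2}, and I would split the argument according to whether some agent requests $r$.

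In the first case there is an agent with $P_{tr}=1$. As $\A$ is finite and $\hmi$ is a total pre-order, the nonempty set of requesters has a maximal element $j$, so $P_{jr}=1$ while $P_{tr}=0$ for every $t$ strictly above $j$ (that is, $a_{tj}=1$ and $a_{jt}=0$); this is exactly \eqref{defi-posicio-optima-cond1} for $j$, so $E_{ij}c_r$ is in good position. Moreover $j\neq i$, since otherwise $i$ would itself be a maximal requester and $c_r$ would already be good. To check rationality I use totality of $\hmi$: if $a_{ji}=1$, then with $P_{jr}=1$ condition \eqref{condicion1-trato-racional} holds; if instead $a_{ji}=0$, totality gives $a_{ij}=1$ and I must verify that $P_{tr}=0$ for every $t$ with $a_{ti}=1$, which is \eqref{condicion2-trato-racional}.

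The hard part will be this last verification. Given any $t$ with $a_{ti}=1$, transitivity with $a_{ij}=1$ gives $a_{tj}=1$; and were $a_{jt}=1$ we would get $a_{ji}=1$ from $a_{jt}=1$ and $a_{ti}=1$, contradicting $a_{ji}=0$. Hence $a_{jt}=0$, so $t$ is strictly above $j$ and maximality of $j$ forces $P_{tr}=0$ (the instance $t=i$ is included, giving $P_{ir}=0$). Thus \eqref{condicion2-trato-racional} holds and $(\F,\G)$ is a rational deal turning $c_r$ into a good column.

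In the second case no agent requests $r$, so $P_{tr}=0$ for all $t$; then \eqref{defi-posicio-optima-cond1} fails for $i$, and since $c_r$ is not good, \eqref{defi-posicio-optima-cond2} must fail as well, forcing $i\notin\min(\A)$. I would then pick any $j\in\min(\A)$, which is nonempty by finiteness; automatically $j\neq i$, and the column assigning $r$ to $j$ satisfies \eqref{defi-posicio-optima-cond2}, hence is in good position. Since $j\in\min(\A)$ we have $a_{ij}=1$, and the universal clause in \eqref{condicion2-trato-racional} holds vacuously because no agent requests $r$; therefore $(\F,\G)$ is again rational. Apart from the transitivity step above, the only points requiring care are the existence of a maximal requester and of a minimal agent, both immediate from finiteness of $\A$ and transitivity of $\hmi$.
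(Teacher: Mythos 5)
Your proof is correct, and its skeleton is the same as the paper's: identify the unique agent $i$ with $f_{ir}=1$, split on whether anyone requests $r$, and move $r$ to a maximal requester (or to a minimal agent when nobody requests it). Two details differ, and both are worth noting. First, the paper maximizes over $I_{r}=\{t\in \A: t\neq i\, \wedge \,P_{tr}=1 \}$, explicitly excluding $i$; as a consequence it must rule out, by a separate contradiction argument, the possibility that $i$ is strictly above the chosen $j$ and requests $r$, before it can assert condition \eqref{defi-posicio-optima-cond1} for $j$. You maximize over the set of \emph{all} requesters and simply observe that $i$ cannot be a maximal requester (otherwise $c_r$ would already be in good position), which absorbs that case automatically and makes the good-position claim immediate. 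Second, the paper does not check the rationality conditions directly: having shown that $E_{ij}c_r$ is in good position and not equivalent to $c_r$, it invokes Lemma \ref{lema-trat-simpl+opt-impl-racional} to conclude that the simple deal is rational. You instead verify \eqref{condicion1-trato-racional} or \eqref{condicion2-trato-racional} by hand, splitting on $a_{ji}=1$ versus $a_{ji}=0$; your transitivity argument in the second case essentially reproduces the proof of that lemma. So your route is self-contained but duplicates machinery the paper already established, while the paper's proof is shorter on the page because it reuses Lemma \ref{lema-trat-simpl+opt-impl-racional}; both are valid.
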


\begin{proof}
Let $c_r$ the column in not good position. We want to find a resource allocation $\G$ such that $(\F,\G)$ is a rational deal associated to the resource in the position  $r$ and that leaves $c_{r}$ in good position. \\
Let $i\in \A$ be the only agent such that $f_{ir}=1$.
Define  $$I_{r}=\{t\in \A: t\neq i\, \wedge \,P_{tr}=1 \}$$
 If $I_{r}=\emptyset$, let $j$ be any element in $ \min(\A)$. If $I_{r}\neq\emptyset$, let $j$ be any element in $ \max(I_{r})$. \\
Now,
if $\F=(f_{in})_{q\times k}=[c_1,\cdots,c_{r}\cdots, c_k]$, let's define  the resource allocation $\G=[c_1,\cdots,E_{ij}c_{r}\cdots, c_k]$.
 Clearly $(\F,\G)$ is a simple deal. To show that it is actually a rational deal, we will show that $E_{ij}c_{r}$ is in good position, and since it is not equivalent to $c_r$, and then use lemma \ref{lema-trat-simpl+opt-impl-racional}. \\
We will consider each case, $I_{r}=\emptyset$ and $I_{r}\neq\emptyset$, separately:\\
Suppose first that $I_{r}=\emptyset$, then $\forall t\in \A$, $t\neq i$ we have that $P_{tr}=0$. Thus $P_{ir}=0$ since otherwise condition
 \eqref{defi-posicio-optima-cond1} would hold and $c_{r}$ would be in good position, contradicting the hypothesis.
Consequently,  for every $t\in \A$, $P_{tr}=0$ and since $j\in \min(A)$, then condition \eqref{defi-posicio-optima-cond2} holds. Hence, $E_{ij}c_{r}$ is in good position.\\
Now suppose that  $I_{r}\neq \emptyset$, then $P_{jr}=1$ and $j\in \max(I_{r})$. Consequently, if $t\in\A$  is such that $t\neq i$, $a_{tj}=1$ and $a_{jt}=0$, we have that  $j\notin I_{r}$ and therefore $P_{tr}=0$.
 Moreover, it can not happen that $a_{ij}=1$ and $ a_{ji}=0$ and $ P_{ir}=1$ since the maximality of $j$ will then imply that $P_{tr}=0$ for every $t\in\A$ such that $a_{ti}=1$ but this implies that condition  \eqref{defi-posicio-optima-cond1} holds for $i$ reaching a contradiction.
Thus, condition \eqref{defi-posicio-optima-cond1} holds for $j$ and hence $E_{ij}c_{r}$ is in good position.
\end{proof}

The following theorem shows that any resource allocation can be converted into a good resource allocation by means of a finite sequence of rational deals.  The proof of the theorem gives us an algorithm that will later result in  an optimal resource allocation as we will see in Corollary \ref{coro:optimal=buena}.

\begin{teo}\label{thm_good_resource} Suppose $\F$ is not a good resource allocation and let $l$ be the number of columns of $\F$ that are not in good position. Then there exists a sequence of rational deals   $\F_1,\cdots, \F_{l+1}$ such that $\F_1=\F$ and  $\F_{l+1}$ is a good allocation.
\end{teo}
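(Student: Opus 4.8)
The plan is to prove this by induction on $l$, the number of columns of $\F$ that are not in good position, using Lemma~\ref{lema-not-opt-impl-deal-rational-good} to repair one bad column at a time. The key observation is that the rational deal supplied by Lemma~\ref{lema-not-opt-impl-deal-rational-good} acts on a single column and leaves all other columns untouched, so repairing one bad column does not disturb the columns that are already in good position. This monotone decrease of the ``bad column count'' is what drives the induction and guarantees termination in exactly $l$ deals, producing the sequence $\F_1,\dots,\F_{l+1}$ of length $l+1$.

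More concretely, I would argue as follows. Since $\F$ is not a good allocation, there are $l\geq 1$ columns not in good position; pick any such column, say the one associated to resource $r$. By Lemma~\ref{lema-not-opt-impl-deal-rational-good} there is a resource allocation $\F_2$ such that $(\F_1,\F_2)$ is a rational deal associated to $r$ and the $r$-th column of $\F_2$ is in good position. Because a simple deal (Definition~\ref{def-trato-simple}) alters only the single column associated to $r$, every column of $\F_2$ other than the $r$-th coincides with the corresponding column of $\F_1=\F$; hence each column that was already in good position in $\F$ remains in good position in $\F_2$, while the $r$-th column has moved from bad to good. Therefore $\F_2$ has exactly $l-1$ columns not in good position. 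Repeating this construction $l$ times yields $\F_1,\F_2,\dots,\F_{l+1}$ where each consecutive pair is a rational deal and $\F_{l+1}$ has no bad columns, i.e.\ $\F_{l+1}$ is a good resource allocation.

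To formalize the repetition cleanly I would phrase it as an induction on $l$. The base case $l=1$ is a direct application of Lemma~\ref{lema-not-opt-impl-deal-rational-good}, giving a single rational deal $(\F_1,\F_2)$ with $\F_2$ good. For the inductive step, after one deal we obtain $\F_2$ with exactly $l-1$ bad columns; if $l-1=0$ we are done, and otherwise the induction hypothesis applied to $\F_2$ yields a sequence $\F_2,\dots,\F_{l+1}$ of rational deals ending in a good allocation, which we prepend with $\F_1$ to complete the sequence.

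The only point requiring genuine care---and the step I expect to be the main obstacle---is verifying that the locality of the deal truly preserves the good-position status of the untouched columns. The notion of good position of the column $c_r$ (Definition~\ref{def-col-y-asig-buenas}) depends only on the single agent $i$ with $f_{ir}=1$ together with the fixed data $\Pref$ and the hierarchy matrix $\A_h$, none of which is modified by a simple deal on a \emph{different} resource. Thus a deal on resource $r$ cannot change whether the column associated to some other resource $r'$ is in good position, and the bad-column count drops by exactly one at each step. Once this locality is made explicit, the induction closes and the sequence has the required length $l+1$.
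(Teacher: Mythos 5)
Your proposal is correct and follows essentially the same route as the paper: repeatedly apply Lemma~\ref{lema-not-opt-impl-deal-rational-good} to repair one bad column per rational deal until all columns are in good position. In fact, your explicit verification that a deal on resource $r$ cannot alter the good-position status of any other column (since that status depends only on the assigned agent, $\Pref$, and $\A_h$) is the locality step the paper leaves implicit in its phrase ``using the same reasoning,'' so your write-up is, if anything, a more complete version of the paper's argument.
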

\begin{proof}
Let $\mathcal{C}=\{c_{n_1},\cdots,c_{n_l}\}$, $1\leq n_1<\cdots<n_l\leq k$,  the set of columns that are not in good position of $\F$. Obviously, $l\geq 1$.
For $c_{n_1}$,
define  $$\F_1=\F=(c_1,\cdots,c_{n_1},\cdots,c_k).$$
By Lemma \ref{lema-not-opt-impl-deal-rational-good}, there  exists a rational deal $(\F_1,\F_2)$ associated to $c_{n_1}$ such that $$\F_2=(c_1,\cdots,E_{i_1j_1}c_{n_1},\cdots,c_k)$$
and $E_{i_1j_1}c_{n_1}$ is in a good position. If $l=1$, then $\F_2$ is a good allocation and this finishes the proof.
\\
Now, suppose that $l> 1$.
{Using the same reasoning, construct the resource allocations $\F_3,\dots,\F_{l+1}$ until a good resource allocation is reached.}\qed
\end{proof}

\begin{algorithm}[]
\textbf{Input} ($\F$, $\A_h$, $\Pref$) \hspace{0.3cm} \{$\F$ allcation, $\A_h$ hierarchy matrix, $\Pref$ requests matrix,
$k$ number of resources, and  $q$ number of agents\}\\
$\F^*= \F$\\
$m\in\min(A_h)$\\
$r=1 $\\
\While {$r\leq k$}{$Cr= (f_{tr})_{1\leq t\leq q};$ \hspace{.3cm}\{column of resource in the position  $r$\}\\
$i=$ ``position of $Cr$ iqual to 1'';\\
$I(r)=\{t:t\neq i,\, \wedge\, P_{tr}=1\}$\\
$w=\max(I(r))$\\
\If
{
$P_{ir=1}$ \textbf{and} $\left[\displaystyle\sum_{t=1,t\neq i}^{q}a_{ti}(1-a_{it})(1-P_{tr})=\displaystyle\sum_{t=1,t\neq i}^{q}a_{ti}(1-a_{it})\right]$\\
\textbf{or} \\
$\displaystyle\sum_{t=1}^{q}(P_{tr}+(1-a_{it}))=0$
}
{
$Cr$ is a good position\\
$r=r+1$
}
\Else{
\If{$\displaystyle\sum_{t=1}^{q}P_{tr}=0$}{$j=m$}\Else{$j=w$}}
$Cr=E_{ij}Cr$\hspace{.3cm} \{$E_{ij}$ permutation matrix \}\\
$\F=\F^*$\\
$r=r+1$
}
\textbf{Exit}($\F^*$)
\caption{Algorithm for a good allocation}
\end{algorithm}

\section{Optimal resource allocations}\label{optimal}

In this subsection we will show that good resource allocations are actually optimal under the Definition \ref{defi_optimal} as long as the relation $\sqsupseteq$ is positive. First we need two technical lemmas.



\begin{lema}\label{lema-conjuntos-domi-0}
Let $\F$ and $\G$ be two resource allocations, then $$D_{\F \G}=[\F\dom \G]\cup [\G\dom \F]\quad\text{ and }\quad [\F\dom \G]\cap [\G\dom \F]=\emptyset$$
\end{lema}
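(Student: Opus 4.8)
The plan is to reduce both assertions to a single pointwise claim. First I would observe that the inclusion $[\F\dom\G]\cup[\G\dom\F]\subseteq D_{\F\G}$ is immediate, since Definition~\ref{conjunto-dominante} builds each dominant set as a subset of $D_{\F\G}$. Hence the only nontrivial content is captured by the following claim: for every $r\in D_{\F\G}$, if $i$ and $j$ are the unique agents with $f_{ir}=1$ and $g_{jr}=1$, then \emph{exactly one} of $a^{r}_{ij}=1$ and $a^{r}_{ji}=1$ holds. The existence of at least one of them gives $D_{\F\G}\subseteq[\F\dom\G]\cup[\G\dom\F]$, completing the first equality; the impossibility of both holding simultaneously gives $[\F\dom\G]\cap[\G\dom\F]=\emptyset$. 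Here one should keep in mind that, since $D_{\G\F}=D_{\F\G}$, membership $r\in[\G\dom\F]$ is governed precisely by the condition $a^{r}_{ji}=1$ (the roles of $i$ and $j$ being swapped in Definition~\ref{conjunto-dominante}).

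To prove the claim I would use the totality of $\hmi$, which forces at least one of $a_{ij},a_{ji}$ to equal $1$, and split into three cases according to the pair $(a_{ij},a_{ji})\in\{(1,0),(0,1),(1,1)\}$. Writing out the priority entries from \eqref{eq-matriz-prioridad}, $a^{r}_{ij}=a_{ij}P_{ir}+(1-a_{ij})(1-P_{jr})$ and $a^{r}_{ji}=a_{ji}P_{jr}+(1-a_{ji})(1-P_{ir})$, the two strict cases are purely algebraic: when $(a_{ij},a_{ji})=(1,0)$ one gets $a^{r}_{ij}=P_{ir}$ and $a^{r}_{ji}=1-P_{ir}$, so $a^{r}_{ij}+a^{r}_{ji}=1$; symmetrically, when $(a_{ij},a_{ji})=(0,1)$ one gets $a^{r}_{ij}=1-P_{jr}$ and $a^{r}_{ji}=P_{jr}$, again summing to $1$. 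In either case exactly one entry equals $1$, regardless of the requests, and in fact $r$ automatically lies in $D_{\F\G}$ here.

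The delicate case, and the one I expect to be the main obstacle, is the indifferent case $(a_{ij},a_{ji})=(1,1)$. Here the formulas collapse to $a^{r}_{ij}=P_{ir}$ and $a^{r}_{ji}=P_{jr}$, so totality alone does not decide which is $1$; indeed the complementarity would fail outright if $P_{ir}=P_{jr}$ (both entries would be $0$, or both $1$). This is exactly where membership $r\in D_{\F\G}$ is indispensable: by Definition~\ref{def-colu-equiv}, having $r\in D_{\F\G}$ together with $a_{ij}=a_{ji}=1$ forces $P_{ir}\neq P_{jr}$, i.e.\ exactly one of $P_{ir},P_{jr}$ equals $1$, and hence exactly one of $a^{r}_{ij},a^{r}_{ji}$ equals $1$. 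Assembling the three cases establishes the claim, and with it both stated identities. The only routine point left to verify is the bookkeeping of which agent plays the role of ``$i$'' and which of ``$j$'' in each dominant set, so that $a^{r}_{ji}=1$ is genuinely the membership condition for $r\in[\G\dom\F]$.
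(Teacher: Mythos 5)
Your proposal is correct and takes essentially the same route as the paper's own proof: the same case analysis on the pair $(a_{ij},a_{ji})$, the same algebraic use of the priority formula \eqref{eq-matriz-prioridad}, and the same appeal to $r\in D_{\F\G}$ precisely in the indifferent case to force $P_{ir}\neq P_{jr}$. If anything, your packaging of both identities into the single claim that \emph{exactly one} of $a^{r}_{ij}$, $a^{r}_{ji}$ equals $1$, together with the explicit use of totality to discard $(a_{ij},a_{ji})=(0,0)$, is slightly tighter than the paper's version, which proves the union and the disjointness separately and only tacitly relies on totality in its case $a_{ij}=a_{ji}$.
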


\begin{proof}
If $D_{\F \G}=\emptyset$, then $[\F\dom \G]=\emptyset= [\G\dom \F]$.
Suppose that  $D_{\F \G}\neq\emptyset$. It is clear that $[\F\dom \G]\cup [\G\dom \F]\subseteq D_{\F \G} $.
It remains to show that $D_{\F \G}\subseteq [\F\dom \G]\cup [\G\dom \F]$.
Let $r\in D_{\F \G}$ and let $i,j\in\A$ be the only agents such that $f_{i r}=1= g_{j r}$.

Suppose first that $a_{ij}\neq a_{ji}$, then from equation \eqref{eq-matriz-prioridad} we have that either
$$a_{ij}^r=P_{ir}\quad\mbox{and }\quad a_{ji}^r=1-P_{ir}$$
or
$$a_{ij}^r=1-P_{jr}\quad\mbox{and}\quad a_{ji}^r=P_{jr}.$$
In any of these cases, either $a_{ij}^r=1$ or  $a_{ji}^r=1$. Thus,
 $r\in [\F\dom \G] \cup[\G\dom \F] $.\\
Suppose now that $a_{ij}= a_{ji}$. Then since $r\in D_{\F \G}$ we have that  $P_{ir}\neq P_{ir}$. Again by equation \eqref{eq-matriz-prioridad} we have that either
$$a_{ij}^r=P_{ir}\quad\mbox{and}\quad a_{ji}^r=P_{jr}$$
or
$$a_{ij}^r=1-P_{jr}\quad\mbox{and}\quad a_{ji}^r=1-P_{ir}.$$
And since $P_{ir}\neq P_{jr}$, the previous statement implies that $r\in [\F\dom \G] $ or $r\in [\G\dom \F] $.
Hence, $D_{\F \G}\subseteq [\F\dom \G]\cup [\G\dom \F]$.

On the other hand, suppose that $r\in [\F\dom \G]\cap [\G\dom \F]$. Then $r\in D_{\F \G}$ and there exist agents $i,j\in\A$ such that $f_{ir}=1=g_{jr}$, and   $a^r_{ij}=1=a^r_{ji}$.

From equation  \eqref{eq-matriz-prioridad}, exactly one of the following two possibilities  happens:
$$a_{ij}P_{ir}=1=a_{ji}P_{jr}$$
or
$$(1-a_{ij})(1-P_{jr})=1=(1-a_{ji})(1-P_{ir})$$

In both cases, $a_{ij}=a_{ji}$ and $P_{ir}=P_{jr}$ contradicting the hypothesis that $c^l_r\not\equiv_r c^p_r$. Hence, $[\F\dom \G]\cap [\G\dom \F]=\emptyset$.

\end{proof}
\Omit{
\begin{lema}\label{lema-conjuntos-domi-1}
Suppose $\F$ and $\G$ are two resource allocations. Then $$[\F\dom \G]\cap [\G\dom \F]=\emptyset$$
\end{lema}}
\Omit{
\begin{proof}
Suppose that $r\in [\F\dom \G]\cap [\G\dom \F]$. Then $r\in D_{\F \G}$ and there exist agents $i,j\in\A$ such that $f_{ir}=1=g_{jr}$, and   $a^r_{ij}=1=a^r_{ji}$.

From equation  \eqref{eq-matriz-prioridad}, exactly one of the following two possibilities  happens:
$$a_{ij}P_{ir}=1=a_{ji}P_{jr}$$
or
$$(1-a_{ij})(1-P_{jr})=1=(1-a_{ji})(1-P_{ir})$$

In both cases, $a_{ij}=a_{ji}$ and $P_{ir}=P_{jr}$ contradicting the hypothesis that $\textcolor{red}{c^l_r\not\equiv_r c^p_r}$. Hence, $[\F\dom \G]\cap [\G\dom \F]=\emptyset$.
\end{proof}
}

\begin{prop}\label{prop:sequence_D}
  Let $\F_1$,$\F_1,\dots$,$\F_n$  be a sequence of rational deals. Then for every $1< p\leq n$, we have that $D_{1,p}=[\F_p\dom \F_1]$,
  where $D_{1,p}$ is an abbreviation of for $D_{\F_1\F_p}$.
\end{prop}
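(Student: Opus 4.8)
The plan is to establish the nontrivial inclusion $D_{1,p}\subseteq[\F_p\dom\F_1]$; the reverse inclusion $[\F_p\dom\F_1]\subseteq D_{1,p}$ holds directly from Definition \ref{conjunto-dominante} together with $D_{\F\G}=D_{\G\F}$, so the two give the claimed equality. By Lemma \ref{lema-conjuntos-domi-0} this inclusion is also equivalent to proving $[\F_1\dom\F_p]=\emptyset$, i.e. that along a sequence of rational deals no resource can end up \emph{worse} placed in $\F_p$ than it was in $\F_1$.

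To prove the inclusion I would fix $r\in D_{1,p}$ and let $a,b\in\A$ be the unique agents with $f^p_{ar}=1$ and $f^1_{br}=1$. Since $c^{\F_1}_r\not\equiv_r c^{\F_p}_r$ and every column is $r$-equivalent to itself, necessarily $a\neq b$, so at least one deal of the sequence acts on column $r$. A rational deal associated to a resource $s\neq r$ leaves column $r$ unchanged; discarding those, the deals associated to $r$ among $\F_1,\dots,\F_p$ act consecutively on column $r$ — the output column of each equals the input column of the next — and carry $c^{\F_1}_r$ to $c^{\F_p}_r$. Corollary \ref{coro-matrices-perm-racional} then says their composite is the transposition $E_{ba}$ and that it represents a rational deal associated to $r$; reading Definition \ref{def-trato-racional-simple} with initial agent $b$ and final agent $a$, one of
\[
(a_{ab}=1)\wedge(P_{ar}=1)
\quad\text{or}\quad
(a_{ba}=1)\wedge\bigl(\forall t\,(a_{tb}=1\Rightarrow P_{tr}=0)\bigr)
\]
must hold.

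It then remains to read off $a^{r}_{ab}=a_{ab}P_{ar}+(1-a_{ab})(1-P_{br})=1$, which by Definition \ref{conjunto-dominante} gives $r\in[\F_p\dom\F_1]$. If the first condition holds this is immediate, since $a_{ab}P_{ar}=1$. If the second holds, taking $t=b$ and using reflexivity $a_{bb}=1$ forces $P_{br}=0$; then $a_{ab}=0$ yields $a^{r}_{ab}=(1-0)(1-0)=1$, while $a_{ab}=1$ combined with $a_{ba}=1$ and $P_{br}=0$ forces $P_{ar}=1$ by the non-$r$-equivalence of the two columns (otherwise $c^{\F_1}_r\equiv_r c^{\F_p}_r$), again giving $a^{r}_{ab}=1$. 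This sub-case is exactly where the hypothesis $r\in D_{1,p}$ is genuinely consumed.

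The step I expect to be the main obstacle is the passage to Corollary \ref{coro-matrices-perm-racional}: the deals of the sequence are generally associated to different resources, so I must first justify ignoring those not associated to $r$ (they fix column $r$) and check that the remaining $r$-deals are truly consecutive on that column, before the Corollary can collapse them into the single transposition $E_{ba}$ satisfying a rational-deal condition. Once that reduction is in place, everything else is the short case computation above, and one must only be careful that the roles of the two agents are swapped between the rational-deal conditions (initial agent $b$, final agent $a$) and the dominance set $[\F_p\dom\F_1]$ (where the $\F_p$-holder $a$ plays the first index).
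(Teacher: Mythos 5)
Your proposal is correct and takes essentially the same route as the paper's proof: isolate the deals associated to $r$ (noting they act consecutively on that column since other deals fix it), collapse them via Corollary \ref{coro-matrices-perm-racional} into a single transposition satisfying one of the two rational-deal conditions, and then conclude $a^{r}_{ab}=1$ by the same two-case computation, using non-$r$-equivalence of the columns exactly where the paper does. Your additional explicit checks (that $a\neq b$ guarantees at least one $r$-deal occurs, and that $P_{br}=0$ follows from reflexivity of $\succeq_h$) are points the paper's proof uses implicitly, so they refine rather than change the argument.
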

\begin{proof}
Let $1<p\leq n$ be fixed. Clearly,
\begin{equation*}
[\F_p\dom \F_1]\subset D_{1,p}
\end{equation*}

It remains to show that $ D_{1,p} \subset[\F_p\dom \F_1]$.  Let $r\in D_{1,p}$ and let $i,j\in\A$ be the only agents such that $f^1_{ir}=1=f^p_{jr}$. We will show that
$$a_{ji}^r=1=a_{ji}P_{jr}+ (1-a_{ji})(1-P_{ir})$$

Let $l\leq p$ be the number of rational deals from the sequence $\F_1$,$\F_2,\dots$,$\F_p$ associated to the resource $r$.
Let $\G_{1},\,\G_{2},\cdots \,\G_{{l+1}}$ be the subsequence of rational deals of $\F_1$,$\F_2,\dots$,$\F_p$ that involve only the resource $r$.  $\G_1=\F_1$ and $\G_l=\F_p$.
Let $E_{i_1j_1}, E_{i_2j_2},\cdots, E_{i_lj_l}$ be the permutation matrices that represent the rational deals $(\G_{1},\G_{2}),\cdots,$ $ (\G_{l},\G_{l+1})$, respectively. Notice that $i=i_1$, $j_1=i_2$,\ $j_2=i_3, \cdots, j_{l-1}=i_l $,  and $j_{l}=j$.
Using corollary \ref{coro-matrices-perm-racional}, we get that $$E_{ij}=E_{i_lj_l}\cdots E_{i_2j_2}E_{i_1j_1}$$
represents a rational deal associated to $r$.  Thus, by Definition \ref{def-trato-racional-simple}, it holds that either
\begin{equation}\label{eq1-aux-prop}
a_{ji}P_{jr}=1
\end{equation}

or
\begin{equation}\label{eq2-aux-prop}
(a_{ij}=1)\wedge (P_{ir}=0)\wedge (\forall k(a_{ki}=1\Rightarrow P_{kr}=0)).
\end{equation}

If equation \eqref{eq1-aux-prop} holds, then $a^r_{ji}=a_{ji}P_{jr}=1$ and consequently $r\in [\F_p\dom \F_1]$.

Let's suppose equation \eqref{eq2-aux-prop} holds. Since $c^1_r\not \equiv_r c^p_r$ then either
$a_{ij}\neq a_{ji}$ or $a_{ij}=a_{ji}$ and $P_{ir}\neq P_{jr} $.

If $a_{ij}\neq a_{ji}$ then $ a_{ji}=0$ and since $P_{ir}=0$, then
$(1-a_{ji})(1-P_{ir})=1=a^r_{ji}.$

On the other hand, if $a_{ij}=a_{ji}$ and $P_{ir}\neq P_{jr} $, then $a_{ji}=1$ and $P_{ir}=1$ and consequently, $a^r_{ji}=a_{ji}P_{jr}=1$.
In both cases we have that $r\in [\F_p\dom \F_1]$.

Thus $D_{1p}\subset [\F_p\dom \F_1]$ and this finishes the proof.
\end{proof}


\begin{lema}\label{lema-asig-buenas-son-optimas}
Suppose $\G$ is a good resource allocation and $\F$ is any resource allocation. Then $[\G\dom \F]= D_{\G\F}$
\end{lema}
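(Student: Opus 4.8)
The plan is to prove the nontrivial inclusion $D_{\G\F}\subseteq[\G\dom\F]$, since the reverse inclusion $[\G\dom\F]\subseteq D_{\G\F}$ is immediate: by Definition \ref{conjunto-dominante} the set $[\G\dom\F]$ is built as a subset of $D_{\G\F}$. So I would fix an arbitrary $r\in D_{\G\F}$ and let $i,j\in\A$ be the unique agents with $g_{ir}=1$ and $f_{jr}=1$. Here $i$ is the agent receiving $r$ under the (good) allocation $\G$, so the good-position hypothesis will apply to $i$. The entire task then reduces to verifying that $a^r_{ij}=1$, that is, by \eqref{eq-matriz-prioridad}, that $a_{ij}P_{ir}+(1-a_{ij})(1-P_{jr})=1$.

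The two facts I would exploit are: (i) $\G$ is good, so its $r$-th column is in good position and hence agent $i$ satisfies one of \eqref{defi-posicio-optima-cond1} or \eqref{defi-posicio-optima-cond2}; and (ii) $r\in D_{\G\F}$ means $c^{\G}_r\not\equiv_r c^{\F}_r$, which by \eqref{eq-vect-equi} together with the totality of $\hmi$ forces either $a_{ij}\neq a_{ji}$ (exactly one direction of the hierarchy holds) or else $a_{ij}=a_{ji}=1$ with $P_{ir}\neq P_{jr}$. I would then split according to which good-position condition holds for $i$.

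In the first case \eqref{defi-posicio-optima-cond1} gives $P_{ir}=1$, so $a^r_{ij}=a_{ij}+(1-a_{ij})(1-P_{jr})$. If $a_{ij}=1$ the value is $1$ at once; if $a_{ij}=0$, totality yields $a_{ji}=1$, so instantiating $t=j$ in \eqref{defi-posicio-optima-cond1} (the premise $a_{ji}=1\wedge a_{ij}=0$ holds) gives $P_{jr}=0$, whence $a^r_{ij}=(1-0)(1-0)=1$. In the second case \eqref{defi-posicio-optima-cond2} gives $P_{tr}=0$ for all $t$, so $P_{ir}=P_{jr}=0$, and $i\in\min(\A)$, which forces $a_{ji}=1$. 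Since now $P_{ir}=P_{jr}$, the non-equivalence of the columns must come from the other disjunct, so $a_{ij}\neq a_{ji}$ and therefore $a_{ij}=0$; then $a^r_{ij}=(1-0)(1-0)=1$. Thus $a^r_{ij}=1$ in every case, so $r\in[\G\dom\F]$, completing the inclusion.

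The only delicate point, and the step I would write most carefully, is the bookkeeping that couples the non-equivalence $c^{\G}_r\not\equiv_r c^{\F}_r$ with the good-position clauses: one must make sure that each invocation of a good-position condition for agent $i$ is instantiated correctly (in particular the legitimate choice $t=j$), and that the totality of $\hmi$ is used precisely to rule out the remaining configurations of the pair $(a_{ij},a_{ji})$. There is no analytic difficulty here; the risk is purely combinatorial case-management. Accordingly I would lay out the two good-position cases and, inside each, the sub-cases on $a_{ij}$ explicitly, checking at each leaf that the expression \eqref{eq-matriz-prioridad} evaluates to $1$.
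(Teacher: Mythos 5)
Your proposal is correct and takes essentially the same approach as the paper: the paper also reduces the claim to the inclusion $D_{\G\F}\subseteq[\G\dom\F]$ and verifies $a^r_{ij}=1$ by a case analysis that coincides with yours, since splitting on which good-position clause holds for agent $i$ is the same as the paper's split on $P_{ir}=1$ versus $P_{ir}=0$, with identical sub-cases on $a_{ij}$ (including the instantiation $t=j$ and the use of totality of $\hmi$). No gaps.
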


\begin{proof}
It is enough to show that  $D_{\G,\F}\subset [\G\dom \F] $.
Let $r\in D_{\G,\F}$ and suppose that $i,k\in \A$ are the only agents such that  $g_{ir}=1$ and $f_{kr}=1$.

We want to show that  $a^r_{ik}=1$ where
\begin{equation}\label{matriz-prioridad-auxiliar}
a^r_{ik}=a_{ik}P_{ir}+ (1-a_{ik})(1-P_{kr})
\end{equation}

\begin{description}
\item[Case 1:] $P_{ir}=1$. Then either $a_{ik}=1$ or $a_{ki}=0$.

If $a_{ik}=1$, then pluging in equation \eqref{matriz-prioridad-auxiliar}, we have that $a^r_{ik}=1$.

If $a_{ik}=0$, then since $\hmi$ is total, then $a_{ki}=1$.
Now, since the any column of the allocation $\G$ is in good position, then condition and $P_{ir}=1$, then \eqref{defi-posicio-optima-cond1} holds:
\[(P_{ir}=1)\wedge(\forall j(a_{ji}=1\wedge a_{ij}=0)\Rightarrow P_{jm}=0 ).\]
Thus, $a_{ki}=1$ and $a_{ik}=0$ imply that $P_{kr}=0$. Plugging in  equation \eqref{matriz-prioridad-auxiliar}, we conclude that $a^r_{ik}=1$.

\item[Case 2:] $P_{ir}=0$. Again using the hypothesis that any column of $\G$ is in good position, condition  \eqref{defi-posicio-optima-cond2} holds:
\[\forall j (P_{jr}=0)\wedge (i\in\min(\A)).\]
In consequence, $P_{kr}=0$ and $a_{ki}=1$. Hence  $P_{kr}=P_{ir}$ and since the $r$ columns in $\F$ and $\G$ are not equivalent, then we must have that $a_{ik}=0$. Pluging in equation \ref{matriz-prioridad-auxiliar},  we obtain $a^r_{ik}=1$.
\end{description}
This finishes the proof.
\end{proof}

The next corollary follows from Lemmas \ref{lema-conjuntos-domi-0} and \ref{lema-asig-buenas-son-optimas}.

\begin{coro}\label{coro_empty}
Suppose $\G$ is a good resource allocation, then for every resource allocation $\F$ we have that $[\F\dom \G]= \emptyset$.
\end{coro}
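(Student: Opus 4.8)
The plan is to derive this directly from the two cited lemmas together with the symmetry $D_{\F\G}=D_{\G\F}$ observed right after equation \eqref{eq-col-no equi}. Fix an arbitrary resource allocation $\F$. Since $\G$ is a good resource allocation, Lemma \ref{lema-asig-buenas-son-optimas} gives
\begin{equation*}
[\G\dom \F]=D_{\G\F}=D_{\F\G}.
\end{equation*}
On the other hand, Lemma \ref{lema-conjuntos-domi-0} tells us that $D_{\F\G}$ splits as the \emph{disjoint} union $D_{\F\G}=[\F\dom \G]\cup[\G\dom \F]$ with $[\F\dom \G]\cap[\G\dom \F]=\emptyset$. In particular $[\F\dom \G]\subseteq D_{\F\G}=[\G\dom \F]$.

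First I would record these two containments side by side: $[\F\dom \G]\subseteq[\G\dom \F]$ from the computation above, and $[\F\dom \G]\cap[\G\dom \F]=\emptyset$ from the disjointness part of Lemma \ref{lema-conjuntos-domi-0}. A set that is simultaneously contained in another set and disjoint from it must be empty, so $[\F\dom \G]=\emptyset$. Since $\F$ was arbitrary, the claim follows for every resource allocation $\F$.

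There is essentially no genuine obstacle here; the argument is a short set-theoretic manipulation, and all the real work has been discharged in the two preceding lemmas. The only point that requires a moment of care is invoking the symmetry $D_{\F\G}=D_{\G\F}$ correctly: Lemma \ref{lema-asig-buenas-son-optimas} is stated with $\G$ in the dominating position as $[\G\dom \F]=D_{\G\F}$, so one must rewrite $D_{\G\F}$ as $D_{\F\G}$ before feeding it into the disjoint decomposition of Lemma \ref{lema-conjuntos-domi-0}, which is phrased in terms of $D_{\F\G}$. Once the indices are aligned, combining the containment with the disjointness closes the proof immediately.
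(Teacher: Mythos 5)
Your proof is correct and is precisely the argument the paper intends: the paper gives no explicit proof, stating only that the corollary ``follows from Lemmas \ref{lema-conjuntos-domi-0} and \ref{lema-asig-buenas-son-optimas},'' and your combination of the disjoint decomposition $D_{\F\G}=[\F\dom \G]\cup[\G\dom \F]$ with the identity $[\G\dom \F]=D_{\G\F}=D_{\F\G}$ is exactly that deduction. The set-theoretic closing step (a set contained in a set disjoint from it is empty) and the care with the symmetry $D_{\F\G}=D_{\G\F}$ are both handled correctly.
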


We are finally ready to prove that good resource allocations and optimal resource allocations coincide when the binary relation $\sqsupseteq$ is positive in the following sense:

\begin{defi}
Given a binary relation $\sqsupseteq$ defined on $\mathcal{P}(\R)$, we say that $\sqsupseteq$ is \emph{positive} if $\emptyset \sqsupseteq \emptyset$ and for every set $A\in\mathcal{P}(\R)$, with $A\neq\emptyset$ we have that \[A \sqsupseteq \emptyset \,\wedge\, \emptyset\not\sqsupseteq A.\]
\end{defi}

\begin{teo}\label{coro:optimal=buena}
Suppose $\sqsupseteq$ is positive. Then any good resource allocation is also an optimal resource allocation and vice versa.
\end{teo}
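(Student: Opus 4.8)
The plan is to prove the two implications separately, with positivity of $\sqsupseteq$ serving as the bridge in both directions.

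For the forward implication, suppose $\G$ is a good resource allocation and let $\F$ be arbitrary. I would first apply Corollary~\ref{coro_empty} to obtain $[\F\dom\G]=\emptyset$, so that establishing $\G\mirec\F$ amounts to checking $[\G\dom\F]\sqsupseteq\emptyset$. I would then split on $D_{\G\F}$. If $D_{\G\F}=\emptyset$, Lemma~\ref{lema-conjuntos-domi-0} forces $[\G\dom\F]=\emptyset$ and the clause $\emptyset\sqsupseteq\emptyset$ of positivity closes the case. If $D_{\G\F}\neq\emptyset$, Lemma~\ref{lema-asig-buenas-son-optimas} gives $[\G\dom\F]=D_{\G\F}\neq\emptyset$, and the clause $A\sqsupseteq\emptyset$ of positivity (for $A\neq\emptyset$) again yields $[\G\dom\F]\sqsupseteq\emptyset$. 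Since $\F$ was arbitrary, $\G$ is optimal by Definition~\ref{defi_optimal}.

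For the converse I would argue by contraposition: assume $\F$ is not good and show it is not optimal. By Theorem~\ref{thm_good_resource} there is a good allocation $\G$ obtained from $\F$ by a sequence of rational deals. I would next verify that $D_{\G\F}\neq\emptyset$: choosing a column $c_r$ of $\F$ that is not in good position, the corresponding column of $\G$ is in good position, so the two columns cannot be $r$-equivalent, whence $r\in D_{\G\F}$. Lemma~\ref{lema-asig-buenas-son-optimas} then gives $[\G\dom\F]=D_{\G\F}\neq\emptyset$, while Corollary~\ref{coro_empty} gives $[\F\dom\G]=\emptyset$. If $\F$ were optimal we would have $\F\mirec\G$, i.e. $\emptyset=[\F\dom\G]\sqsupseteq[\G\dom\F]$ with $[\G\dom\F]$ nonempty, contradicting the clause $\emptyset\not\sqsupseteq A$ of positivity. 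Hence $\F$ is not optimal, which is the desired contrapositive.

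The main obstacle is the claim used in the converse that $r$-equivalence preserves good position, which guarantees $D_{\G\F}\neq\emptyset$ and is precisely what lets positivity bite. I would establish it directly from Definitions~\ref{def-colu-equiv} and~\ref{def-col-y-asig-buenas}: two $r$-equivalent columns assign $r$ to agents $i$ and $j$ with $a_{ij}=a_{ji}=1$ and $P_{ir}=P_{jr}$, so $i$ and $j$ lie in the same hierarchy class and request $r$ identically; both defining conditions of good position (the set of agents strictly above the assignee, membership in $\min(\A)$, and the value $P_{ir}$) depend only on this class and on $P_{ir}$, hence hold for $i$ iff they hold for $j$. Once this small fact is in place the remainder is routine bookkeeping with Lemmas~\ref{lema-conjuntos-domi-0} and~\ref{lema-asig-buenas-son-optimas} and Corollary~\ref{coro_empty}.
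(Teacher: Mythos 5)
Your proof is correct, and your forward direction is essentially the paper's: Lemma~\ref{lema-asig-buenas-son-optimas} and Corollary~\ref{coro_empty} give $[\G\dom\F]=D_{\G\F}\sqsupseteq\emptyset=[\F\dom\G]$, and your case split on whether $D_{\G\F}$ is empty merely spells out the two clauses of positivity. The converse is where you genuinely diverge. The paper applies Theorem~\ref{thm_good_resource} to the non-good allocation and then invokes Proposition~\ref{prop:sequence_D} --- the identity, valid along any sequence of rational deals, equating $D_{\F_1\F_p}$ with the dominance set of the final allocation over the initial one --- to conclude strict dominance. You instead discard the deal structure: Theorem~\ref{thm_good_resource} serves only as an existence statement for a good allocation, after which the same pair Lemma~\ref{lema-asig-buenas-son-optimas}/Corollary~\ref{coro_empty} used in the forward direction yields $[\G\dom\F]=D_{\G\F}$ and $[\F\dom\G]=\emptyset$. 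Your route buys uniformity (Proposition~\ref{prop:sequence_D} is not needed at all) and generality (any good allocation works, not just one reachable from the given one by deals); what it costs is the obligation to verify $D_{\G\F}\neq\emptyset$, since positivity refutes $\emptyset\sqsupseteq A$ only for nonempty $A$. Your auxiliary claim discharging this --- that $r$-equivalent columns are in good position simultaneously, because both conditions of Definition~\ref{def-col-y-asig-buenas} depend only on the hierarchy class of the assignee (via transitivity of $\hmi$) and on its request entry for $r$ --- is correct as sketched. It is worth noting that the paper's own converse silently needs this same nonemptiness (without it, positivity cannot deliver $\neg(\G\unrhd\F)$, hence cannot deliver strictness), and its printed proof even contains a slip (``consequently $[\F\dom\G]=\emptyset$'' should read $[\G\dom\F]=\emptyset$); so your version is, if anything, more explicit and complete than the published argument.
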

\begin{proof}
Let $\G$ be a good resource allocation and $\F$ be any resource allocation, then by Lemma \ref{lema-asig-buenas-son-optimas} and Corollary \ref{coro_empty} we have that \[[\G\dom \F]= D_{\G\F}\sqsupseteq\emptyset = [\F\dom \G]\] and consequently $\G\unrhd\F$. Thus $\G$ is optimal.

Conversely, suppose $\G$ not a good resource allocation, then by Theorem \ref{thm_good_resource} and Proposition \ref{prop:sequence_D}, there exists a good resource allocation $\F$ such that $[\F\dom\G]=D_{\F\G}$, consequently $[\F\dom\G]=\emptyset$ and by the positivity of $\sqsupseteq$,  we have that $\F\rhd\G$. Thus $\G$ is not optimal.
\end{proof}

\begin{remark}
For any two optimal resource allocations $\F$ and $\G$ it holds that $\F\unrhd\G$ and $\G\unrhd\F$.
\end{remark}

\begin{coro}\label{coro:existe_mejor_optima}
Suppose $\sqsupseteq$ is positive, then for every resource allocation $\F$ there exists an optimal resource allocation $\F^*$ such that $\F^*\mirec\F$.
\end{coro}
\Omit{
\begin{proof}
If $\F$ is an optimal resource allocation, the result holds trivially. Suppose $\F$ is not optimal, then by Theorem \ref{thm_good_resource} it is possible to find a sequence $\F_1,\F_2,\dots, \F_l$ of rational deals such that $\F_1=\F$ and $\F^*=\F_l$ is a good resource allocation. Now by Corollary \ref{coro_empty} we have that $[\F\dom \F^*]= \emptyset$ and since the relation $\sqsupseteq$ is nonnegative, we have that $[\F\dom \F^*]\sqsupseteq [\F^*\dom \F]$. Thus, $\F^*\unrhd \F$ and the result holds by Corollary \ref{coro:optimal=buena}.
\end{proof}
}
The following corollary justifies the term ``optimal allocation'' in the sense of the relation $\unrhd$.
\begin{coro}
Suppose $\sqsupseteq$ is positive. Suppose also that $\G$ is an  optimal resource allocation and $\F$ is a non-optimal resource allocation. Then $\G\rhd \F$.
\end{coro}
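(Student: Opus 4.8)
The plan is to reduce the statement $\G\rhd\F$ to a single nonemptiness fact about the dominance set $[\G\dom\F]$, and then to extract that fact from the non-optimality of $\F$. Since $\sqsupseteq$ is positive and $\G$ is optimal, Theorem~\ref{coro:optimal=buena} tells us that $\G$ is a good resource allocation. This lets me invoke the two structural facts about good allocations: by Corollary~\ref{coro_empty} we have $[\F\dom\G]=\emptyset$, and by Lemma~\ref{lema-asig-buenas-son-optimas} we have $[\G\dom\F]=D_{\G\F}$. Now $\G\unrhd\F$ is immediate, either directly from the optimality of $\G$ (Definition~\ref{defi_optimal}) or because $[\G\dom\F]=D_{\G\F}\sqsupseteq\emptyset=[\F\dom\G]$ by positivity. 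Hence the whole claim $\G\rhd\F$ reduces to proving $\neg(\F\unrhd\G)$. Unfolding the definition of $\unrhd$ and substituting the two facts above, $\F\unrhd\G$ is exactly $\emptyset\sqsupseteq D_{\G\F}$; by positivity this holds if and only if $D_{\G\F}=\emptyset$. So it suffices to show $D_{\G\F}\neq\emptyset$.

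The crux, and the step I expect to be the main obstacle, is therefore establishing $D_{\G\F}\neq\emptyset$ from the assumption that $\F$ is not optimal. I would argue by contraposition: suppose $D_{\G\F}=\emptyset$, so that for every resource $r$ the columns $c^{\F}_r$ and $c^{\G}_r$ are $r$-equivalent, i.e. the agent $j$ receiving $r$ under $\F$ and the agent $i$ receiving it under $\G$ satisfy $a_{ij}=a_{ji}=1$ (that is, $i\hi j$) and $P_{ir}=P_{jr}$. I would then show that the good position of $c^{\G}_r$ is inherited by $c^{\F}_r$: since $\hmi$ is total and transitive, $i\hi j$ means $i$ and $j$ lie at the same hierarchy level, so the set of agents strictly above $j$ coincides with the set of agents strictly above $i$, and $i\in\min(\A)$ if and only if $j\in\min(\A)$. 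Consequently condition~\eqref{defi-posicio-optima-cond1} (respectively~\eqref{defi-posicio-optima-cond2}) for $i$ transfers verbatim to $j$, using also $P_{jr}=P_{ir}$. Thus every column of $\F$ is in good position, so $\F$ is a good allocation and, by Theorem~\ref{coro:optimal=buena}, optimal, contradicting the hypothesis that $\F$ is non-optimal.

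Finally I would assemble the pieces: $D_{\G\F}\neq\emptyset$ yields $\emptyset\not\sqsupseteq D_{\G\F}$ by positivity, hence $\neg(\F\unrhd\G)$; combined with $\G\unrhd\F$ this is precisely $\G\rhd\F$. The only delicate point is the inheritance of the good-position conditions under $r$-equivalence, which is a short but careful manipulation of the totality and transitivity of the hierarchy preorder; the remaining arguments are routine bookkeeping with the positivity axiom and the two cited results.
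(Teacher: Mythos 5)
Your proof is correct, and it is in fact more complete than the paper's own argument. The paper disposes of this corollary in one line: Corollary~\ref{coro_empty} gives $[\F\dom \G]=\emptyset$, ``and the result follows from the positivity of $\sqsupseteq$.'' But positivity applied to $[\F\dom \G]=\emptyset$ only yields $\G\unrhd\F$; since $\emptyset\sqsupseteq\emptyset$ holds, if $[\G\dom\F]$ were also empty one would get $\F\unrhd\G$ as well, and strictness would fail. Ruling that out requires using the non-optimality of $\F$, which the published proof never invokes. That omitted step is precisely what you isolated as the crux: showing $[\G\dom\F]=D_{\G\F}\neq\emptyset$ (the identification with $D_{\G\F}$ coming from Lemma~\ref{lema-asig-buenas-son-optimas}, since $\G$ is good by Theorem~\ref{coro:optimal=buena}). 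Your contraposition argument is sound: if $D_{\G\F}=\emptyset$, then for each $r$ the agent receiving $r$ under $\F$ has the same hierarchy level and the same request bit as the agent receiving it under $\G$, and the good-position conditions \eqref{defi-posicio-optima-cond1} and \eqref{defi-posicio-optima-cond2} do transfer across such $r$-equivalence by totality and transitivity of $\hmi$; hence $\F$ would be good, thus optimal by Theorem~\ref{coro:optimal=buena}, a contradiction. Two small remarks: first, the same inheritance fact used contrapositively on a single column gives a slightly more direct route --- $\F$ non-optimal means some column of $\F$ is not in good position, the corresponding column of $\G$ is, and by inheritance these two columns cannot be $r$-equivalent, so $D_{\G\F}\neq\emptyset$ without quantifying over all columns; second, one could instead reach nonemptiness through the machinery of Theorem~\ref{thm_good_resource} and Proposition~\ref{prop:sequence_D}, as the paper does in the converse direction of Theorem~\ref{coro:optimal=buena}, but that produces a good allocation possibly different from $\G$ and still needs an equivalence-transfer argument to relate it back to $\G$. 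Either way, your write-up supplies a justification that the paper leaves implicit.
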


\begin{proof}
Corollary \ref{coro_empty} implies that $[\F\dom \G]= \emptyset$ and the result follows from the positivity of  $\sqsupseteq$.
\end{proof}

\Omit{
\begin{proof}
This is a direct consequence of the previous Corollary.
\end{proof}
\textcolor{red}{NUEVO}
}
\section{Discriminating good allocations}\label{discriminating}
In this section we search to discriminate good allocations. We use a criterion of local satisfaction to compare the the good allocations the relation optimal with respect this criterion will be the the allocation locally fair.

In order to understand the problems of lack of discrimination let us see the following example:

\begin{ejem}\label{example-1}
Let $A=\{1,2,3\}$ and $\R=\{r_1, r_2, r_3, r_4, r_5, r_6\}$. Suppose that all the agents have equal hierarchy; moreover
 Agent 3 requests all the resources except excepto $r_6$;  Agent 2 requests the resources $r_1$, $r_2$, $r_3$ and $r_4$  but he doesn't request $r_5$ nor $r_6$, whereas agente 1, request the first 3 resources.
Then the request matrix $\Pref$ and the hierarchy matrix $\A_h $  are
$$\Pref=
\left(
\begin{array}{c|cccccc}
& r_1 & r_2& r_3& r_4& r_5& r_6\\
\hline
1&1 & 1& 1& 0& 0&0\\
2&1 & 1& 1& 1& 0&0\\
3&1 & 1& 1& 1& 1&0\\
\end{array}
\right)\quad A_{h}=\left(\begin{matrix}
1&1&1\\
1&1&1\\
1&1&1\\
\end{matrix}\right)
$$

Notice that we can make a partition of $\R$ in the subsets $\{r_1,\,r_2\,r_3\}$, $\{r_4\}$, $\{r_5\}$ and $\{r_6\}$. Any
 good allocation distributes the resources $\{r_1,\,r_2,\,r_3\}$ among the agents $\{1,2,3\}$; $\{r_4\}$ among the agents $\{2,3 \}$; the resource $\{r_5\}$ to agent 5 and the resource $\{r_6\}$ among the agents $\{1,2,3\}$. Now consider the following allocations:
\begin{center}
{\footnotesize
$\mathcal{E}=\left(\begin{matrix}
0&0&0&0&0&0\\
0&0&0&0&0&0\\
1&1&1&1&1&1\\
\end{matrix}\right)$,
$\F=\left(\begin{matrix}
1&1&0&0&0&0\\
0&0&1&1&0&0\\
0&0&0&0&1&1\\
\end{matrix}\right)$,
$
\G=\left(\begin{matrix}
1&0&0&0&0&1\\
0&1&0&1&0&0\\
0&0&1&0&1&0\\
\end{matrix}\right)$,
$\mathcal{H}=\left(\begin{matrix}
1&0&0&0&0&0\\
0&1&0&1&0&1\\
0&0&1&0&1&0\\
\end{matrix}\right)$}
\end{center}

Although all four allocations are good, it seems that some are \emph{fairer} than the others.
Indeed, $\mathcal{E}$ looks very unfair because all the resources are allocated to the third agent.
For example allocation $\mathcal{F}$ distributes better the resources than $\mathcal{E}$. If we now focus  only on the subset $\{r_1,r_2,r_3\}$, then $\mathcal{G}$ and $\mathcal{H}$ have a better \emph{local} distribution than $\mathcal{F}$. It is then natural to think that we need a criterion to better distinguish among good allocations. We will develop local criteria in order to distinguish good allocations.
These local criteria will be aggregate in a social criterion.
\end{ejem}

Let  $\mathcal{B}$ the set of good allocations.
For each $r$ in $\R$, we define
\begin{equation}\label{eq-def-A(r)}
A(r)=\{j\in \A: \exists \F=(f_{in} )\in \mathcal{B}\,\, \mbox{such that}\,\,  f_{jr}=1\}
\end{equation}

It is clear that $A(r)\neq \emptyset$. since otherwise, $\mathcal{B}=\emptyset$.
 \Omit{
\begin{remark}
Si $J_{r}=\{i\in\A: \Pref_{ir}=1\}$, entonces $$i\in\A(r)\iff \left\lbrace
\begin{matrix}
i\in\min(\A)& \mbox{si,}\, J_{r}= \emptyset
\\
i\in \max(J_r)& \mbox{si,}\, J_{r}\neq \emptyset
\end{matrix}
\right.$$
\end{remark}
}
Define $\R^*=\{s\in \R: \Pref_{is}=0,\forall i\in \A\}$, that is, the set of resources that nobody requests. Notice that 
 for every  $r\in \R^*$,  $A(r)=\min(\A)$. If $r\not \in \R^*$  and $i,j\in A(r)$ then  $i$ and $j$ request $r$, and moreover both agents have the same hierarchy.

On $\R\setminus \R^* $ we define the equivalence relation $\sim$ as
\begin{equation*}
r\sim s\Leftrightarrow A(r)=A(s).
\end{equation*}

%
We denote by $[r]=\{s\in \R\setminus \R^*: A(s)=A(r) \}$ the equivalence class of $r$ on $\R\setminus \R^*$. Suppose that  there are  $l$ different classes then
\begin{equation}\label{eq:clases}
\R\setminus \R^*=[r_{m_1}]\cup [r_{m_2}]\cup\cdots \cup[r_{m_l}]
\end{equation}

If we put $\R^*=[r_{m_{l+1}}]$, then we have the partition $\R=[r_{m_1}]\cup [r_{m_2}]\cdots [r_{m_l}]\cup [r_{m_{l+1}}] $.


 Let $\F=(f_{in})$ be an allocation  and  $r\in \R$. We will denote as $\F_{[r]}$ the submatrix $A(r)\times [r]$ of  $\F$ composed by the rows with the agents in  $A(r)$ and the columns with the resources in $[r]$.

For every agent   $i\in A(r)$, the number of resources of  $[r]$ allocated to a $i$ via $\F$ is
$\sum_{s\in[r]}f_{is}$.
Consider the  
average of these values:
\begin{equation}\label{eq-media}
\overline{X}(\F_{[r]})=\displaystyle\sum_{i\in A(r)}\displaystyle\sum_{s\in[r]}\dfrac{f_{is}}{|A(r)|}.
\end{equation}
It is easy to see that all good allocations have the same average on the submatrix $A(r)\times [r]$. Moreover,
\begin{equation}\label{eq-average-on-submatrix}
\overline{X}(\F_{[r]})=\frac{|[r]|}{|A(r)|}.
\end{equation}
We consider now a  dispersion  measure with respect to this average denoted  $var(\F_{[r]})$ defined as
\begin{equation}\label{eq-varianza}
var(\F_{[r]})=\displaystyle\sum_{i\in A(r)}\displaystyle\sum_{s\in[r]}\dfrac{(f_{is}-\overline{X}_{\F_{[r]}})^2}{|A(r)|},
\end{equation}
and  we define the dispersion vector of $\F$ by

\begin{equation}\label{eq-vector-variance}
\gamma(F)=(var(\F_{[r_1]}),
var(\F_{[r_2]}),\cdots, var(\F_{[r_{l+1}]}) ).
\end{equation}

Using this vector we define a new relation on the set of good allocations.

\begin{defi}\label{def-satisfaciion-local}
 Let $\F$ and $\G$ good allocations. We say that $\F$ gives more local satisfaction than $\G$, denoted  $\F\succeq_\gamma \G$, if for all $i=1\cdots l+1$ we have  $(\gamma(\G))_i\geq (\gamma(\F))_i$. If for every $\G\in \mathcal{B}$, we have $\F\geq_\gamma \G$, we say that  $\F$ is locally fair.
\end{defi}

We say that
$\F\succ_{\gamma}G$ iff $\F\succeq_{\gamma}\G $ and there exists $k$ such that $(\gamma(G))_k>(\gamma(F))_k$. Whereas
$\F\sim_{\gamma}G$ iff for every $i=1\cdots l+1$ we have $(\gamma(\G))_i= (\gamma(\F))_i$. It is clear that  the relation $\succeq_{\gamma}$  is not a total preorder on  $\mathcal{B}$.
The following example illustrates the previous concepts.

\begin{ejem}\label{ej1}
Let's go back to example \eqref{example-1}  
Notice that, $A(r_{m_1})=A(r_1)=A(r_2)=A(r_3)=\{1,2,3\}$; $A(r_{m_2})=A(r_4)=\{2,3\}$, $A(r_{m_3})=A(r_5)=\{3\}$ and $A(r_{m_4})=A(r_6)=\{1,2,3\}$.
It is also easy to check that
$[r_{m_1}]=\{r_1,\,r_2,\,r_3\}$, $[r_{m_2}]=[r_4]=\{r_4\},\, [r_{m_3}]=[r_5]=\{r_5\}$ and $[r_{m_4}]=[r_6]=\{r_6\}$.
Using equation \eqref{eq-average-on-submatrix}, we have that
in $A(r_{m_1})\times [r_{m_1}]$ the average of any allocation in $\mathcal{B}$ is $1$; in $A(r_{m_2})\times [r_{m_2}]$ the average is $1/2$; in $A(r_{m_3})\times [r_{m_3}]$ the average is $1$; whereas in   $A(r_{m_4})\times [r_{m_4}]$ the average is ${1}/{3}$.

Now by equations \eqref{eq-varianza} and \eqref{eq-vector-variance} we have the dispersion vectors.
\begin{equation*}
\gamma(\mathcal{E})=(2,1/4,0,2/9)\quad\quad
\gamma(\mathcal{F})=(2/3,1/4,0,2/9)
\end{equation*}
\begin{equation*}
\gamma(\mathcal{G})=(0,1/4,0,2/9)\quad\quad
\gamma(\mathcal{H})=(0,1/4,0,2/9)
\end{equation*}

Now, according to Definition \ref{def-satisfaciion-local}, allocations  $\mathcal{G}$ and $\mathcal{H}$ give more local satisfaction than  $\mathcal{F}$ which in turn, gives more local satisfaction than $\mathcal{E}$. That is,
$\G\sim_{\gamma}\mathcal{H}
$ and
$\F\succ_{\gamma}\mathcal{E}$. Moreover, it can be shown that $\G$ and $\mathcal{H}$ are  locally fair allocations.


\end{ejem}


Although the relation $\geq _{\gamma}$ is not total on $\mathcal{B}$, we will show that there always exists a locally fair allocation. We will need the following technical Lemma.

\begin{lema}\label{lema-tecnico}
Let $x_1,\,x_2,\,\cdots,\,x_n$ be positive integers.
If $\displaystyle\sum_{i=1}^n x_i=L$, there exists a natural number $\beta\leq n-1$, such that
\begin{equation}\label{eq-lema-tecnico}
\dfrac{1}{n}\displaystyle\sum_{i=1}^n \left(x_i-\frac{L}{n}\right)^2\geq \dfrac{(n-\beta)\beta}{n^2}
\end{equation}
\end{lema}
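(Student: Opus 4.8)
The plan is to interpret the left-hand side as the (population) variance of the data $x_1,\dots,x_n$ and to show that, over all ways of writing $L$ as a sum of $n$ positive integers, it is minimized by the most balanced distribution; the value of that minimum will turn out to be exactly the right-hand side. Concretely, I would write $L=qn+\beta$ with $q=\lfloor L/n\rfloor$ and $0\le\beta\le n-1$, and take this residue $\beta$ as the required natural number. Because every $x_i\ge 1$ we have $L\ge n$, hence $q\ge 1$, so $q$ and $q+1$ are both positive; consequently the distribution consisting of $\beta$ entries equal to $q+1$ and $n-\beta$ entries equal to $q$ is a legitimate distribution of positive integers, summing to $\beta(q+1)+(n-\beta)q=L$.

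Next I would reduce the inequality to a minimization of $\sum_i x_i^2$. Using $\sum_i x_i=L$ one has the identity
$$\frac1n\sum_{i=1}^n\left(x_i-\frac{L}{n}\right)^2=\frac1n\sum_{i=1}^n x_i^2-\left(\frac{L}{n}\right)^2,$$
so, since the mean $L/n$ is fixed, the left-hand side is minimized exactly when $\sum_i x_i^2$ is. It therefore suffices to prove that among all distributions of $L$ into $n$ positive integers, $\sum_i x_i^2$ is smallest for the most balanced one described above.

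The heart of the argument is a smoothing step: if the given data have two entries with $x_a\ge x_b+2$, then replacing the pair $(x_a,x_b)$ by $(x_a-1,x_b+1)$ preserves the sum $L$, keeps all entries positive (since $x_a\ge 3$), and changes $\sum_i x_i^2$ by
$$(x_a-1)^2+(x_b+1)^2-x_a^2-x_b^2=-2(x_a-x_b-1)\le -2,$$
a strict decrease. Hence any minimizer has all its entries within $1$ of one another, and the only such distribution summing to $L$ is the one with $\beta$ copies of $q+1$ and $n-\beta$ copies of $q$. This gives $\sum_i x_i^2\ge \beta(q+1)^2+(n-\beta)q^2$ for the original data.

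Finally I would evaluate the variance of the balanced distribution directly: its mean is $L/n=q+\beta/n$, the $\beta$ entries equal to $q+1$ each deviate by $(n-\beta)/n$ and the $n-\beta$ entries equal to $q$ each deviate by $-\beta/n$, so
$$\frac1n\left[\beta\left(\frac{n-\beta}{n}\right)^2+(n-\beta)\left(\frac{\beta}{n}\right)^2\right]=\frac{\beta(n-\beta)\,[(n-\beta)+\beta]}{n^3}=\frac{(n-\beta)\beta}{n^2},$$
which, combined with the minimality established above, yields \eqref{eq-lema-tecnico}. I expect the smoothing/exchange argument to be the only genuinely substantive step; the remaining identities are routine algebra. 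When $n$ divides $L$ the residue is $\beta=0$ and the bound reduces to the trivial statement that the variance is nonnegative.
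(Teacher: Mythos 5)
Your proof is correct. The reduction to minimizing $\sum_i x_i^2$, the exchange (smoothing) step, and the closed-form evaluation of the balanced configuration are all sound; the only points worth making explicit are that a minimizer exists because there are only finitely many vectors of positive integers summing to $L$, and that the balanced minimizer is unique up to permutation. Neither affects the argument.

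Your route is genuinely different from the paper's, and in fact stronger. The paper's proof writes $L=n\alpha+\beta$ with $\beta\in\{0,\dots,n-1\}$ and then asserts that ``for each $i$, $x_i=\alpha$ or $x_i=\alpha+1$''. That assertion does not follow from the hypotheses: for $n=2$ and $(x_1,x_2)=(1,3)$ one gets $\alpha=2$, $\beta=0$, and $x_1\notin\{2,3\}$ (the claimed inequality still holds there, with slack). What the paper then carries out is exactly the computation in your last display: the variance of a vector whose entries take only the values $\alpha$ and $\alpha+1$ equals $(n-\beta)\beta/n^2$. In other words, the paper verifies only the equality case, and your smoothing argument --- replacing a pair $(x_a,x_b)$ with $x_a\ge x_b+2$ by $(x_a-1,x_b+1)$, which preserves the sum and positivity and strictly decreases $\sum_i x_i^2$ --- is precisely the missing reduction from an arbitrary vector to the balanced one. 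The distinction is not cosmetic: in Theorem \ref{main-theorem} the lemma is applied to the row sums $\sum_{s\in[r_{m_t}]}g_{is}$ of an arbitrary good allocation $\G$, and these need not take only two adjacent values (for allocation $\mathcal{E}$ of Example \ref{example-1} they are $0,0,3$ on the first class --- not even positive). So the general inequality you prove is the one the paper actually needs; note also that your exchange step goes through verbatim for nonnegative integers, which is the hypothesis under which the lemma is really invoked there.
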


%

\Omit{
Para demostrar la exitencia de una asignaciÃ³n que sea localmente equitativa. Utilizaremos el siguiente resultado (lema tÃ©cnico).

\begin{lema}
Sean $x_1,\,x_2,\,\cdots,\,x_n$ nÃºmeros naturales distintos de cero.
Si $\displaystyle\sum_{i=1}^n x_i=L$, entonces existe un  Ãºnico natural $\beta$, con $0\leq \beta\leq n-1$, tal que
\begin{equation}\label{eq-lema-tecnico}
\dfrac{1}{n}\displaystyle\sum_{i=1}^n (x_i-\frac{L}{n})^2\geq \dfrac{(n-\beta)\beta}{n^2}
\end{equation}

\end{lema}}
\begin{proof}
 For every $1\leq i\leq n$, $x_i\geq 1$, therefore  $L\geq n$. Let $\alpha$ and $\beta$ be the unique natural numbers such that $L=\displaystyle\sum_{i=1}^n x_i=n\alpha +\beta$ with $\beta\in \{0,\cdots, n-1\}$.
Notice that,  for each $i$, $x_i=\alpha$ or $x_i=\alpha+1$. If $x_i=\alpha$ for every $i$, then $L=n\alpha$. consequently, $\beta=0$ and \eqref{eq-lema-tecnico} holds.
Now suppose that there exist $m$ natural numbers such that $x_i=\alpha+1$. If $m=n$ then $\sum_{i=1}^nx_i=n\alpha+n\neq L$. Hence, $m<n$.
On the other hand,
$$L=\displaystyle\sum_{i=1}^mx_i+\displaystyle\sum_{i=m+1}^nx_i=m(\alpha+1)+(n-m)\alpha=n\alpha+m$$
and therefore,
$m=\beta$.
Now,
\begin{eqnarray*}
\dfrac{1}{n}\displaystyle\sum_{i=1}^n\left(x_i-\dfrac{L}{n}\right)^2&=\dfrac{1}{n}\left[\displaystyle \sum_{i=1}^m \left(\alpha+1-\alpha-\dfrac{m}{n}\right)^2+\displaystyle \sum_{i=m+1}^n \left(\alpha-\alpha-\dfrac{m}{n}\right)^2 \right]\\
&=\dfrac{1}{n}\left[\displaystyle \sum_{i=1}^m \left(\dfrac{n-m}{n}\right)^2+\displaystyle \sum_{i=m+1}^n \left(\dfrac{m}{n}\right)^2 \right]\quad\quad\quad\quad\quad \\
&=\dfrac{1}{n}\left[m\dfrac{(n-m)^2}{n^2}+(n-m)\dfrac{m^2}{n^2} \right]= \dfrac{m(n-m)}{n^2}
\end{eqnarray*}
and since $m=\beta$, then equation \eqref{eq-lema-tecnico} holds.
\end{proof}

\begin{teo}\label{main-theorem}
There exists a good allocation $\F^*$ that is locally fair.
\end{teo}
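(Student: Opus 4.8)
The plan is to exploit the fact that membership in $\mathcal{B}$ decomposes, column by column, into independent choices, so that the dispersion vector $\gamma$ can be minimized in each coordinate simultaneously by a single allocation.

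First I would make explicit the characterization of good allocations that is implicit in Definition~\ref{def-col-y-asig-buenas}: since the two conditions \eqref{defi-posicio-optima-cond1} and \eqref{defi-posicio-optima-cond2} defining a column in good position refer only to the agent receiving that column, a matrix $\F$ is a good allocation if and only if, for every resource $r$, the agent to whom $r$ is granted lies in $A(r)$, where $A(r)=\min(\A)$ if $r\in\R^*$ and $A(r)=\max(\{i\in\A:P_{ir}=1\})$ otherwise. In particular the per-resource choices are mutually independent: any function selecting, for each $r$, an element of $A(r)$ yields a good allocation. Consequently, building a good allocation amounts to specifying, inside each equivalence class $[r_{m_t}]$ of the partition \eqref{eq:clases}, an arbitrary distribution of the $|[r_{m_t}]|$ resources of that class among the agents of the common set $A(r_{m_t})$.

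Next I would observe that the $t$-th coordinate $var(\F_{[r_{m_t}]})$ of $\gamma(\F)$ depends only on the counts $n_i=\sum_{s\in[r_{m_t}]}f_{is}$, $i\in A(r_{m_t})$, which are nonnegative integers summing to $|[r_{m_t}]|$; hence the coordinates of $\gamma$ are governed by disjoint blocks of resources and can be optimized independently. Writing $n_t=|A(r_{m_t})|$ and $|[r_{m_t}]|=n_t\alpha_t+\beta_t$ with $0\le\beta_t<n_t$, Lemma~\ref{lema-tecnico} (whose computation applies verbatim when some parts vanish) shows that for every good allocation $\G$ one has $var(\G_{[r_{m_t}]})\ge \frac{\beta_t(n_t-\beta_t)}{n_t^2}$, with equality attained by the balanced distribution in which $\beta_t$ agents receive $\alpha_t+1$ resources and the remaining $n_t-\beta_t$ agents receive $\alpha_t$ resources. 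This balanced distribution is always realizable, since the resources of $[r_{m_t}]$ are distinct and may be assigned freely to the agents of $A(r_{m_t})$.

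Finally I would assemble $\F^*$ by choosing, in each class $[r_{m_t}]$ independently, such a balanced distribution. By the characterization of the first step $\F^*\in\mathcal{B}$, and by the second step $var(\F^*_{[r_{m_t}]})=\frac{\beta_t(n_t-\beta_t)}{n_t^2}$ attains the minimum in every coordinate. Hence $(\gamma(\G))_t\ge(\gamma(\F^*))_t$ for all $t$ and all $\G\in\mathcal{B}$, which is precisely $\F^*\succeq_{\gamma}\G$ by Definition~\ref{def-satisfaciion-local}; thus $\F^*$ is locally fair. The delicate point is not any single computation but the structural claim underpinning the whole argument: that the constraint of being a good allocation factorizes across resources, and hence across the classes $[r_{m_t}]$, so that the componentwise minima of $\gamma$ are jointly achievable by one allocation even though $\succeq_{\gamma}$ is only a partial order.
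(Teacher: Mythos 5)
Your proposal is correct and follows essentially the same route as the paper's own proof: partition $\R$ into the classes $[r_{m_t}]$, write $|[r_{m_t}]|=|A(r_{m_t})|\alpha_t+\beta_t$, construct the balanced distribution inside each class, and invoke Lemma~\ref{lema-tecnico} to obtain coordinatewise minimality of the dispersion vector $\gamma$. The only difference is that you spell out two points the paper leaves implicit --- that membership in $\mathcal{B}$ factorizes resource by resource (so the balanced distribution is actually realizable by a good allocation) and that the lemma must be read as covering vanishing counts --- which tightens rather than alters the argument.
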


\begin{proof}
We want to find  $\F^*\in \mathcal{B}$ such that $\gamma(\F^*)\leq \gamma(\G)$ for all $\G\in \mathcal{B}$.
Let's consider the partition of $\R$ as in equation \eqref{eq:clases}.

For each $t=1,\dots, l+1$, there exists  unique nonnegative integers $\alpha_{t}$ and $\beta_{t}$  such that
\begin{equation}
|[r_{m_t}]|=|A(r_{m_t})|\alpha_{t}+\beta_{t}
\end{equation}
with $\beta_{t}\in\{0,1,\dots, |A(r_{m_t})|-1\}$.

For each $t\in \{1,\dots l+1\}$, choose  $\beta_{t}$ agents in $A(r_{m_t})$. Let's say $\{j_1,\dots j_{\beta_{t}}\}\subseteq A(r_{m_t})$.

Let's choose $\F^*=(f^*_{in})\in \mathcal{B}$ such that for each $t\in \{1,\dots l+1\}$,
\[\sum_{s\in [r_{m_t}]}f^*_{js}=\begin{cases}
\alpha_{r_{t}}+1, \quad \text{ if } j\in  \{j_1,\dots j_{\beta_{t}}\}\\
\alpha_{r_{t}}, \quad \text{ if } j\in  A(r_{m_t})\setminus \{j_1,\dots j_{\beta_{t}}.\}
\end{cases}\]

Now, on each submatrix  $A(r_{m_t})\times [r_{m_t}]$ we have that
$$\overline{X}(\F^*_{[r_t]})=\alpha_{r_t}+\dfrac{\beta_{r_t}}{|A(r_t)|} \quad \text{and}\quad
var(\F^*_{[r_t]})=\beta_t\dfrac{|A(r_t)|-\beta_t}{|A(r_t)|^2}.$$

On the other hand, for every $\G\in \mathcal{B}$ and $t=1,\dots, l+1$, by equations \eqref{eq-average-on-submatrix} and \eqref{eq-varianza} we have that

$$var(G_{[r_{m_t}]})=\dfrac{1}{|A(r_{m_t})|}\displaystyle\sum_{i=1}^{|A(r)|}\left(\sum_{s\in[r_{m_t}]}g_{is}-\frac{|[r_{m_{t}}]|}{A(r_{m_{t}})}\right)^2
$$

Finally by Lemma \ref{lema-tecnico},
$$var(G_{[r_{m_t}]})\geq var(F^*_{[r_{m_t}]})$$
and consequently, for all $\G\in\mathcal{B}$
 $$\gamma(\F^*)\leq \gamma(\G)$$ \qed

\end{proof}

The following result tells us that for the good allocations, the dispersion measure is invariant under any permutation in the submatrix  $A(r)\times[r]$. For instance,  in Example \ref{ej1}, the allocations $\G$ and $\mathcal{H}$ are different in the column 6. But,
 $$ \G_{[r_{m_4}]}=\left(\begin{matrix}
0&1&0\\
1&0&0\\
0&0&1
\end{matrix}\right)\mathcal{H}_{[r_{m_4}]}.$$ Consequently
the allocation $\F^*$  in the previous theorem is not necessarily unique.

\Omit{
\begin{lema}\label{lema-invarianza-de-la-varianza}
Let  $\F$ be a good allocation and $r\in \R$. Consider $\F_{[r]}$, 
 If $P$ and $Q$ are permutation matrices of the identity matrix of order   $|A(r)|$ and order $|[r]|$, respectively, then $$var(\F_{[r]})=var(P\cdot\F_{[r]})=var(\F_{[r]}\cdot Q)$$


\end{lema}}

\Omit{
\begin{proof} Basta con probar que los conjuntos $X({\F_{[r]}})$, $X({\F_{[r]}}\cdot Q)$ y $X(P\cdot \F_{[r]})$ son iguales.
Veamos que $X(\F_{[r]})=X({\F_{[r]}}\cdot Q)$

Sea $Q$ una matriz de permutaciÃ³n de, $I_{[r]}$, la matriz de identidad de tamaÃ±o $|[r]|$. Sea $J=\{s\in [r]: Q_{ss}=0\}$. Si $J=\emptyset$ entonces $J=I_{[r]}$. El resultado es inmediato.
Supongamos que $J\neq \emptyset$. Para cada $s\in J$, existe un Ãºnico $s^*\in J$  y un Ãºnico $i_s\in A(r)$ tal que $s^*\neq s$,  $Q_{ss^*}=1=Q_{s^*s}$ y $f_{i_ss}=1$, donde $\F=(f_{in})$. Luego, para cada $s\in J$, existe un Ãºnico $s^*\in J$  y un Ãºnico $i_s\in A(r)$ tal que
$f_{i_ss}=f_{i_ss}\cdot Q_{ss^*}$
Como  $Q_{ts^*}=0$ para todo $t\neq s$, entonces $f_{i_ss}\cdot Q_{ss^*}=({\F_{[r]}}\cdot Q))_{i_ss^*}$.
AsÃ­,
\begin{equation}\label{eq-1}
\displaystyle\sum_{s\in J}f_{i_ss}=\displaystyle\sum_{s\in J}({\F_{[r]}}\cdot Q))_{i_ss^*}
\end{equation}
 Si $s\in [r]\setminus J$, entonces $Q_{ss}=1$ y $f_{i_ss}=f_{i_ss}Q_{ss}=({\F_{[r]}}\cdot Q))_{i_ss}$.
Luego
\begin{equation}\label{eq-2}
\displaystyle\sum_{s\in [r]\setminus J}f_{i_ss}=\displaystyle\sum_{s\in [r]\setminus J}({\F_{[r]}}\cdot Q))_{i_ss}
\end{equation}
de las ecuaciones \eqref{eq-1} y \eqref{eq-2}, se tiene que
los conjuntos $X({\F_{[r]}})$ y  $X({\F_{[r]}}\cdot Q)$ son iguales. Consecuentemente,
$$var(\F_{[r]})=var(\F_{[r]}\cdot Q)$$
De manera similar se prueba que $X({\F_{[r]}})$, $ X(P\cdot{\F_{[r]}})$ son iguales.
\end{proof}
}

\begin{prop}
Supose that $\R=[r_{m_1}]\cup [r_{m_2}]\cup \cdots \cup [r_{m_l}]\cup [r_{m_{l+1}}]$, where $l$ is the number of different equivalent classes of $\R\setminus \R^*$ and $[r_{m_{l+1}}]=\R^*$. Let $\F$ be a good allocation.
 If $Q_1,\,Q_2\,\cdots Q_{l+1}$ are permutation matrices  of the identity $I_{[r_{m_i}]}$ for every $i=1,\cdots, l+1$, then
$$\gamma(\F)=\gamma(\G)$$
where $\G_{[r_{m_i}]}=Q_i\cdot\F_{[r_{m_i}]}$  for every $i=1,\cdots, l+1$.
\end{prop}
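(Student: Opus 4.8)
The plan is to reduce the equality $\gamma(\F)=\gamma(\G)$ to a single block and then observe that multiplication by a permutation matrix only rearranges entries. By \eqref{eq-vector-variance}, the $i$-th coordinate of $\gamma(\F)$ is $var(\F_{[r_{m_i}]})$ and that of $\gamma(\G)$ is $var(\G_{[r_{m_i}]})=var(Q_i\F_{[r_{m_i}]})$. Since the classes $[r_{m_1}],\dots,[r_{m_{l+1}}]$ partition the columns and the $Q_i$ act on disjoint blocks, it suffices to prove, for each fixed $i$, that $var(Q_i\F_{[r_{m_i}]})=var(\F_{[r_{m_i}]})$.

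First I would record the relevant structure of $\F_{[r_{m_i}]}$: because $\F$ is a good allocation, for each $s\in[r_{m_i}]$ the agent receiving $s$ lies in $A(s)=A(r_{m_i})$, so every column of $\F_{[r_{m_i}]}$ has exactly one entry equal to $1$, the total is $|[r_{m_i}]|$, and by \eqref{eq-average-on-submatrix} the average is $|[r_{m_i}]|/|A(r_{m_i})|$. Multiplying by $Q_i$ rearranges the rows (equivalently the columns) of this block while preserving its $0$--$1$ pattern, its dimensions, and its total number of ones; in particular $Q_i\F_{[r_{m_i}]}$ has the same average. The crux is that $Q_i$ preserves the multiset of entries of the block, and likewise the multiset of row sums $\sum_{s\in[r_{m_i}]}f_{is}$ over $i\in A(r_{m_i})$: a column permutation fixes each row sum, while a row permutation merely permutes them. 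Since $var(\F_{[r_{m_i}]})$ in \eqref{eq-varianza} is, up to the factor $\tfrac{1}{|A(r_{m_i})|}$, a symmetric function of the block's entries (each compared against the common average), and coincides with the row-sum expression used in the proof of Theorem \ref{main-theorem}, it depends only on the multiset just described. Hence $var(Q_i\F_{[r_{m_i}]})=var(\F_{[r_{m_i}]})$, and assembling the coordinates gives $\gamma(\G)=\gamma(\F)$.

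The only delicate point is the bookkeeping that left-multiplication by $Q_i$ keeps the support of the block inside $A(r_{m_i})\times[r_{m_i}]$ and preserves both the average and the multiset of row sums; once this is checked, the symmetry of the variance formula makes the conclusion immediate. I would also note that the argument is insensitive to the reading of $Q_i$, whether it permutes the agents of $A(r_{m_i})$ or the resources of $[r_{m_i}]$, since both operations leave the multiset of entries, and hence the dispersion, invariant.
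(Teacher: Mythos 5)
Your proof is correct and takes essentially the same route as the paper, whose entire proof is just the coordinatewise assertion $var(Q_i\cdot\F_{[r_{m_i}]})=var(\F_{[r_{m_i}]})$ stated without justification (the supporting invariance lemma was omitted from the paper). You additionally justify that step---permutations preserve the block's average, its multiset of entries, and its multiset of row sums---which makes your argument valid under either reading of \eqref{eq-varianza} (the entrywise formula as printed, or the row-sum dispersion actually used in Theorem~\ref{main-theorem} and Example~\ref{ej1}; note these two expressions do not literally coincide, contrary to your aside, but since both are invariant under your multiset argument nothing breaks).
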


\begin{proof}
$$\gamma(\F)=
(var(\F_{[r_{m_1}]}),var(\F_{[r_2]}),\cdots,var(\F_{[r_{l+1}]}))$$
$$=(var(Q_1\cdot\F_{[r_{m_i}]}),
var(Q_2\cdot\F_{[r_{m_2}]},\cdots,var(Q_{l+1}\cdot\F_{[r_{m_{l+1}}]}))=
\gamma(\G).$$
\end{proof}

\section{Final Remarks}\label{conclu}
In this work we  have tackled the problem of finding a good resource allocation from a qualitative point of view.
A central idea is to use a notion of qualitative social welfare based on the dominance plausible rule of Dubois et al. \cite{DFPP02,DFP03}.
The use of matrix techniques of representation for the relations involved allows us a better organization of results
and a simplification of proofs.

We have generalized the  preliminary results presented by Pino P\'erez et al. \cite{PV12,PVC16} in three ways:
\begin{enumerate}
\item The hierarchical relation between agents can be now a total preorder instead of a linear order.
\item The likelihood relation between the resources can be now a total preorder instead of linear order.
\item The plausibility relation between sets of resources can be now a positive relation extending
the likelihood relation between the resources instead of  only the posibilistic relation considered in \cite{PV12,PVC16}.
\end{enumerate}

Unlike the results in  \cite{PV12,PVC16} in which an optimal allocation is unique, here the class of optimal allocations can contain more than one element. In this direction, we have introduced some local techniques
and give one technique to globally compare  optimal allocations. A future work is to introduce other techniques  for discriminating these elements.  
We note that in this work the attitude of an agent  toward a resource is binary, that is, the agent requests or not the resource.
Thus, another generalization  of this work could be to consider   more complex representations of the  agents' preferences.

%
%
%
%

\bibliographystyle{splncs04}

\end{document}